\title{Bandits with Replenishable Knapsacks:\\ the Best of both Worlds}
\author{
    Martino Bernasconi\\
    Politecnico di Milano\\
    \texttt{martino.bernasconideluca@polimi.it}
    \And 
    Matteo Castiglioni\\
    Politecnico di Milano\\
    \texttt{matteo.castiglioni@polimi.it}\\
    \And 
    Andrea Celli\\
    Bocconi University \\
    \texttt{andrea.celli2@unibocconi.it}\\
    \And 
    Federico Fusco\\
    Sapienza Università di Roma\\
    \texttt{federico.fusco@uniroma1.it}\\
}
\begin{document}

\maketitle

\begin{abstract}
The bandits with knapsack (BwK) framework models online decision-making problems in which an agent makes a sequence of decisions subject to resource consumption constraints. The traditional model assumes that each action consumes a non-negative amount of resources and the process ends when the initial budgets are fully depleted. We study a natural generalization of the BwK framework which allows non-monotonic resource utilization, i.e., resources can be replenished by a positive amount. We propose a best-of-both-worlds primal-dual template that can handle any online learning problem with replenishment for which a suitable primal regret minimizer exists. In particular, we provide the first positive results for the case of adversarial inputs by showing that our framework guarantees a constant competitive ratio $\alpha$ when $B=\Omega(T)$ or when the possible per-round replenishment is a positive constant. Moreover, under a stochastic input model, our algorithm yields an instance-independent $\tilde \cO(T^{1/2})$ regret bound which complements existing instance-dependent bounds for the same setting. Finally, we provide applications of our framework to some economic problems of practical relevance.
\end{abstract}

\section{Introduction}

We study online learning problems in which a decision maker tries to maximize their cumulative reward over a time horizon $T$, subject to a set of $m$ resource-consumption constraints. At each $t$, the decision maker plays an action $x_t\in\cX$, and subsequently observes a realized reward $f_t(x_t)$, with $f_t:\cX\to[0,1]$, and an $m$-dimensional vector of resource consumption $\vc_t(x_t)$.

Our framework extends the well-known \emph{Bandits with Knapsacks} (BwK) framework of \citet{Badanidiyuru2018jacm}. In the BwK model, the resource consumption is \emph{monotonic} (\ie $\vc_t(\cdot)\in[0,1]^m$ for all $t\in [T]$).
This framework has numerous motivating applications ranging from dynamic pricing to online ad allocation (see, \eg \cite{besbes2009dynamic,babaioff2012dynamic,wang2014close,badanidiyuru2012learning,combes2015bandits}), and it has been extended in numerous directions such as modeling adversarial inputs \cite{immorlica2022jacm} and other non-stationary input models \cite{celli2023best,fikioris2023approximately}, more general notions of resources and constraints \cite{agrawal2019bandits}, contextual and combinatorial bandits \cite{badanidiyuru2014resourceful,agrawal2016efficient,sankararaman2018combinatorial}.   

\citet{kumar2022non} recently proposed a natural generalization of the BwK model in which resource consumption can be \emph{non-monotonic}, that is, resources can be replenished or renewed over time so costs are no longer required to be such that $\vc_t(\cdot)\succeq\vec{0}$. We call such model \emph{Bandits with Replenishable Knapsacks} (BwRK).

\xhdr{Contributions.} We propose a general primal-dual template that can handle online learning problems in which the decision maker has to guarantee some long-term resource-consumption constraints and resources can be renewed over time.
We show that our framework provides \emph{best-of-both-worlds} guarantees in the spirit of \citet{balseiro2022best}: it guarantees a regret bound of $\tilde O(T^{1/2})$ in the case in which $(f_t,\vc_t)$ are i.i.d. samples from a fixed but unknown distribution, and it guarantees a constant-factor competitive ratio in the case in which budgets grow at least linearly in $T$, or when the possible per-round replenishment is a positive constant. We remark that known best-of-both-worlds frameworks like the one by \citet{balseiro2022best} cannot be applied to this setting as they assume monotonic resource consumption. In that case, we show that our framework recovers the state-of-the-art rate of $1/\rho$ by \citet{castiglioni2022online}, where $\rho$ is the per-iteration budget. 
Our primal-dual template is applicable to any online problem for which a suitable primal regret minimizer is available. Therefore, we first provide general guarantees for the framework without making any assumption on the primal and dual regret minimizers being employed (\Cref{sec:guarantees meta}). Then, we show how such regret minimizers should be chosen depending on the information available to the learner about the intensity of the budget replenishment (\Cref{sec:choosing}). Moreover, we provide explicit bounds that only depend on the guarantees of the primal regret minimizer.
In particular, we show how the primal and dual minimizers should be instantiated in the case in which the amount of resources that can be replenished at each time $t$ is known, and in the more challenging case in which it is unknown a-priori to the decision maker. Finally, we demonstrate the flexibility of our framework by instantiating it in some relevant settings (\Cref{sec:applications}). First, we instantiate the framework in the BwRK model by \citet{kumar2022non}, thereby providing the first positive results for BwRK under adversarial inputs, and the first instance-independent regret bound for the stochastic setting. The latter complements the instance-dependent analysis by \citet{kumar2022non}.
Then, we apply the framework to a simple inventory management problem, and to revenue maximization in bilateral trade.

\section{Preliminaries}\label{sec: preliminaries}

Vectors are denoted by bold fonts. Given vector $\vx$, let $\vx[i]$ be its $i$-th component. The set $\{1,\ldots,n\}$, with $n\in\N_{>0}$, is denoted as $[n]$. Finally, given a discrete set $S$, we denote by $\Delta^{S}$ the $|S|$-simplex.

\subsection{Basic set-up}
There are $T$ rounds and $m$ resources. The decision maker has an arbitrary non-empty set of available strategies $\cX$.
In each round $t\in[T]$, the decision maker chooses $x_t\in\cX$, and subsequently observes a reward function $f_t:\cX\to[0,1]$, and a function $\vc_t:\cX\to[-1,1]^{m}$ specifying the consumption or replenishment of each of the $m$ resources. 
Each resource $i\in[m]$ is endowed with an initial budget of $B$ to be spent over the $T$ steps. \footnote{For ease of notation we consider a uniform intial buget, but the case of different initial budgets easily follows from our results.} 
We denote by $\rho$ the per-iteration budget, which is such that $B=T\rho$, and we let $\vrho\defeq \rho\vone\in\R_{>0}^m$.
For $i\in[m]$ and $x\in\cX$, if $c_{t,i}(x)<0$ we say that at time $t$ action $x$ \emph{restores} a positive amount to the budget available for the $i$-th resource. If $c_{t,i}(x)>0$, we say that action $x$ at time $t$ \emph{depletes} some of the available budget for the $i$-th resource.

Let $\gamma_t\defeq(f_t,c_t)$ be the input pair at time $t$, and $\vgamma_{T}\defeq(\gamma_1,\gamma_2,\ldots,\gamma_T)$ be the sequence of inputs up to time $T$. 
The repeated decision making process stops at the end of the time horizon $T$. The goal of the decision maker is to maximize their cumulative reward $\sum_{t=1}^T f_t(x_t)$ while satisfying the resource-consumption constraints $\sum_{t=1}^T c_{t,i}(x_t)\le T\cdot\rho$ for each $i\in [m]$.   

Given two functions $f:\cX\to\R$ and $\vc:\cX\to\R^m$, we denote by $\cL_{f,\vc}:\cX \times \R^m_{\ge 0}\to \R$ the Lagrangian function defined as 
\[ \cL_{f,\vc}(x,\vlambda)\defeq f(x)+ \langle \vlambda, \vrho-\vc(x)\rangle\,\,\textnormal{ for all }x\in\cX,\vlambda\in\R^{m}_{\ge 0}.\]
Given $\cX$, the set of \emph{strategy mixtures} $\Xi$ is the set of probability measures on the Borel sets of $\cX$. Moreover, we endow $\cX$ with the Lebesgue $\sigma$-algebra and assume that all possible functions $f_t,\vc_t$ are measurable with respect to every probability measure $\vxi \in \Xi$.

\subsection{Baseline adversarial setting} 
Given a sequence of inputs $\vgamma_T$ selected by an oblivious adversary, the baseline for the  adversarial setting is $\OPT_{\vgamma}:=\sup_{x\in\cX}\sum_{t=1}^Tf_t(x)$, which is the total expected reward of the best \emph{fixed unconstrained} strategy in hindsight belonging to $\cX$. 
Given $t\in[T]$, we write $\OPT_{\vgamma,t}$ to denote the expected reward of the best fixed feasible policy for the sequence of inputs restricted to $(\gamma_1,\ldots,\gamma_t)$.
Moreover, for any sequence of inputs $\vgamma_T$, and $t\in[T]$, let $\tilde f_\tau: \cX\to[0,1]$ and $\tilde \vc_t:\cX\to [0,1]^m$ be such that:
\begin{equation}\label{eq: tilde functions}
\tilde f_t(x)\defeq \frac{1}{t}\sum_{s=1}^t f_t(x)\hspace{.2cm}\textnormal{and}\hspace{.2cm}\tilde \vc_\tau(x)\defeq \frac{1}{t}\sum_{s=1}^t \vc_s(x),\,\quad \forall x\in\cX.
\end{equation}
Then, for $t\in[T]$, we define $\OPT_{\tilde f_t}\defeq \sup_{x \in \cX}\tilde f_t(x)$, and the baseline for the adversarial setting can be rewritten as $\OPT_{\vgamma}=T\cdot  \OPT_{\tilde f_T}$.
In the setting with monotonic resource utilization and adversarial inputs, previous work usually employs weaker baselines (see, \eg \cite{immorlica2022jacm,castiglioni2022online,castiglioni2022unifying}). For example, \citet{castiglioni2022online} considers the reward attained by the best fixed strategy mixture until budget depletion, after which the void action is played. We show that, despite the stronger baseline, we match the state-of-the-art $1/\rho$ competitive-ratio by \citet{castiglioni2022online} when $\beta=0$.
We will work under the following standard assumption. 

\begin{assumption}\label{ass:adv}
    There exists a \emph{void action} $\nullx\in \cX$ and a constant $\beta\ge0$ such that $c_{t,i}(\nullx)\le-\beta$, for all resources $i \in [m]$ and $t\in [T]$.
\end{assumption}
Notice that when $\beta=0$ we recover the standard assumption of BwK (see, \eg\cite{Badanidiyuru2018jacm,immorlica2022jacm}).
We will often parametrize regret bounds using $\nu \defeq \beta+\rho$. %
Intuitively, this parameter measures how much budget is available at each iteration, and how fast the available budget can be replenished.

\subsection{Baseline stochastic setting} 

In the stochastic version of the problem, each input $\gamma_t=(f_t,\vc_t)$ is drawn i.i.d. from some fixed but unknown distribution $\distr$ over a set of possible input pairs. 
Let $\bar f:\cX\to[0,1]$ be the expected reward function, and $\bar \vc:\cX\to[0,1]^m$ be the expected resource-consumption function (where both expectations are taken with respect to $\distr$).

Given two arbitrary measurable functions $f:\cX\to[0,1]$, $\vc:\cX\to[-1,1]^m$, we define the following linear program, which chooses the strategy mixture $\xi$  that maximizes the reward $f$, while keeping the expected consumption of every resource $i\in[m]$ given $c$ below a target $\rho$:
\begin{equation}\label{eq:opt lp gen}
\OPTLP_{f,c}\defeq\mleft\{\hspace{-1.25mm}\begin{array}{l}
\displaystyle
\sup_{\xi\in\Xi}\E_{x\sim\xi}\mleft[f(x)\mright] \\ [2mm]
\displaystyle \text{\normalfont s.t. } \E_{x\sim\xi}\mleft[ \vc(x)\mright]\preceq\vrho
\end{array}\mright.,
\end{equation}
In the stochastic setting, our baseline is $\OPTLP_{\bar f,\bar \vc}$. It is well-known that $T\cdot \OPTLP_{\bar f,\bar \vc}$ is an upper bound on the expected reward of any algorithm (see, \eg, \citep[Lemma 3.1]{Badanidiyuru2018jacm} and \citep[Lemma 2.1]{kumar2022non}).

\begin{lemma}[Lemma 2.1 of \citet{kumar2022non}]\label{lemma: stoc opt ub}
In the stochastic setting, the total expected reward of any algorithm is at most $T \cdot \OPTLP_{\bar f, \bar \vc}$.
\end{lemma}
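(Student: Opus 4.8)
The plan is to establish the standard LP-relaxation upper bound by exhibiting, for any algorithm, a feasible solution to the linear program \eqref{eq:opt lp gen} (with $f = \bar f$ and $\vc = \bar\vc$) whose objective value is at least the algorithm's per-round expected reward. First I would fix an arbitrary (possibly randomized, history-dependent) algorithm and let $x_t \in \cX$ be the action it plays at round $t$. Since the process always runs for exactly $T$ rounds in our model, the algorithm's total expected reward is $\E\mleft[\sum_{t=1}^T f_t(x_t)\mright]$, where the expectation is over the i.i.d.\ draws of $\vgamma_T \sim \distr^{\otimes T}$ and the algorithm's internal randomness.

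The key step is to construct the averaged strategy mixture $\bar\xi \in \Xi$ defined by $\bar\xi(A) \defeq \frac{1}{T}\sum_{t=1}^T \Pr[x_t \in A]$ for every Borel set $A \subseteq \cX$; this is a valid probability measure on $\cX$, hence an element of $\Xi$. I would then show two things. First, for feasibility: because the resource-consumption constraint $\sum_{t=1}^T c_{t,i}(x_t) \le T\rho$ must hold surely (the algorithm is required to respect the budget), taking expectations and using that at round $t$ the input $\gamma_t$ is independent of $x_t$ (which depends only on $\gamma_1,\dots,\gamma_{t-1}$ and past randomness), we get $\E[c_{t,i}(x_t)] = \E[\bar c_i(x_t)] = \E_{x\sim\bar\xi}[\bar c_i(x)]\cdot$ — more precisely $\sum_t \E[c_{t,i}(x_t)] = \sum_t \E[\bar c_i(x_t)] = T\,\E_{x\sim\bar\xi}[\bar c_i(x)] \le T\rho$, so $\E_{x\sim\bar\xi}[\bar\vc(x)] \preceq \vrho$ and $\bar\xi$ is LP-feasible. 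Second, for the objective: by the same conditioning argument, $\E\mleft[\sum_{t=1}^T f_t(x_t)\mright] = \sum_{t=1}^T \E[\bar f(x_t)] = T\,\E_{x\sim\bar\xi}[\bar f(x)] \le T\cdot\OPTLP_{\bar f,\bar\vc}$, where the last inequality is because $\bar\xi$ is a feasible point of the maximization \eqref{eq:opt lp gen}.

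The main obstacle — more a subtlety than a genuine difficulty — is the measure-theoretic bookkeeping: one must check that $\bar\xi$ is well-defined as a probability measure (countable additivity passes through the finite average), that all the functions involved are integrable (they are bounded, by $f_t \in [0,1]$ and $\vc_t \in [-1,1]^m$), and that the interchange $\E[c_{t,i}(x_t) \mid \gamma_1,\dots,\gamma_{t-1}] = \bar c_i(x_t)$ is legitimate — this is exactly the i.i.d.\ assumption together with the fact that $x_t$ is measurable with respect to the past. Since the excerpt already assumes every $f_t, \vc_t$ is measurable with respect to every $\vxi \in \Xi$, this last point is guaranteed by hypothesis. Everything else is a routine application of linearity of expectation and the tower property, so I expect the proof to be short. (Indeed, this is precisely Lemma~2.1 of \citet{kumar2022non}, whose argument we are recapitulating; the only point worth stressing relative to classical BwK is that here the process deterministically lasts $T$ rounds, which if anything makes the bound cleaner than in the stopping-time formulation.)
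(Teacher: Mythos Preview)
Your proposal is correct and follows the standard LP-relaxation argument. Note that the paper does not actually prove this lemma itself---it simply cites it as Lemma~2.1 of \citet{kumar2022non}---so there is no in-paper proof to compare against; your argument is precisely the classical one (define the time-averaged occupancy measure, use independence of $\gamma_t$ from $x_t$ to pass to $\bar f,\bar\vc$, and invoke feasibility of the averaged mixture), and it goes through cleanly here because the process runs for exactly $T$ rounds.
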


In the stochastic setting we make the same ``positive drift'' assumption of \citet{kumar2022non}, which is weaker than our Assumption~\ref{ass:adv} in the adversarial case.
\begin{assumption}\label{ass:stoc}
    There exists of a \emph{void action} $\nullx\in \cX$ such that, for all resources $i\in [m]$, it holds $\E\mleft[c_{i}(\nullx)\mright]\le-\beta$, where $\beta\ge0$ and the expectation is with respect to the draw of the $\vc$ from $\distr$.
\end{assumption}

When $\beta=0$, we recover the standard assumption for the monotonic case by  \citet{badanidiyuru2013bandits}.

\subsection{Regret minimization}\label{sec:reg min}

We will consider regret minimizers for a set $\cW$ as generic algorithms that implements two functions: (i) the function $\nextelement$ returns an element $w_t\in\cW$, and (ii) the function $\observeutility[\cdot]$ which takes some feedback and updates the internal state of the regret minimizer. 
In the \emph{full-feedback} model the regret minimizer observes as feedback a function $u_t:\cX\to\mathbb{R}$, while in the \emph{bandit-feedback} model it observes only the realized $u_t(w_t)$.
The standard objective of a regret minimizer is to control the cumulative regret with respect to a set $\cY \subseteq \cW$ defined as $\cR_T(\cY):=\sup_{w\in\cY}\sum_{t=1}^T(u_t(w)-u_t(w_t))$. 
In the following, we will also exploit a more general notion of regret, in which the regret minimizer suffers regret only in specific rounds. In particular, given 
a subset of rounds $\cT\subset [T]$ we define $\cR_\cT(\cY):=\sup_{w\in \cY}\sum_{t\in\cT}(u_t(w)-u_t(w_t))$.
Then, we can recover common notions of regret such as standard (external) regret for which $\cT=[T]$, and \emph{weakly-adaptive regret} for which $\ireg_T(\cY):=\sup_{\cI=[t_1,t_2]\subseteq [T]} \cR_{\cI}(\cY)$ \cite{hazan2007adaptive}. We remove the dependency from $\cY$ when $\cY=\cW$ (\eg we write $\cR_T$ in place of $\cR_T(\cW)$).

\section{Primal-dual template}

We assume to have access to two regret minimizers with the following characteristic. 
A bandit-feedback \emph{primal} regret minimizer $\cRp$ which outputs a strategy $x_t\in\cX$ at each $t$, and subsequently receives as feedback the realized utility function $\lossp[t](x_t)= f_t(x_t)+\langle\vlambda_t, \vrho-\vc_{t}(x_t)\rangle$, and a full-feedback \emph{dual} regret minimizer $\cRd$ that receives as input the utility function: $\lossd:\vlambda\mapsto\langle\vlambda, \vc_t(x_t)-\vrho\rangle$. note that the dual regret minimizer always has full feedback by construction.\footnote{We focus on the more challenging bandit-feedback setting but our results easily extend to the full-feedback setting.}

\cref{alg:meta alg} summarizes the structure of our primal-dual template. For each $t$, if the available budget $B_{t,i}$ is less than 1 for some resource $i$, the algorithm plays the void action $\nullx$ and updates the budget accordingly. This ensures that the budget will never fall below 0. Otherwise, the regret minimizer plays action $x_t$ at time $t$, which is determined by invoking \nextelement. Then, $\lossp(x_t)$ and $\lossd$ are observed, and the budget consumption is updated according to the realized costs $\vc_t$. 
If the budget was at least $1$, the internal state of the two regret minimizers is updated via \observeutility[\cdot], on the basis of the feedback specified by the primal loss $\lossp(x_t)$, and the dual loss function $\lossd[t]$.
The algorithm terminates when the time horizon $T$ is reached.

\begin{algorithm}[tb]
	\caption{Primal-Dual template}
	\label{alg:meta alg}
	\begin{algorithmic}[1]
		\State {\bfseries Input:} parameters $B,T$; regret minimizers $\cRp$ and $\cRd$
		\State {\bfseries Initialization:} $ B_{1,i}\gets B,\forall i\in[m]$; initialize $\cRp,\cRd$; $\cTG=\{\emptyset\}$, $\cT_\nullx=\{\emptyset\}$.
		\For{$t = 1, 2, \ldots , T$}
		\If{$ \exists\, i \in [m]:B_{t,i} < 1$}
            \State $\cT_\nullx \gets \cT_\nullx \cup \{t\}$
		\State {\bfseries Primal action:} $x_t\gets\nullx$
		\State {\bfseries Observe costs:} Observe $\vc_t(\nullx)$ and update available resources: $\vB_{t+1}\gets \vB_{t} - \vc_t(\nullx)$
		\Else
            \State $\cTG \gets \cTG \cup \{t\}$
		\State {\bfseries Dual decision:} $\vlambda_t \gets\cRd.\nextelement$
		\State{\bfseries Primal decision:} $x_t \gets \cRp.\nextelement$		
		\State {\bfseries Observe cost:} Observe $\vc_t(x_t)$ and update available resources: $\vB_{t+1}\gets \vB_{t} - \vc_t(x_t)$
		\State {\bfseries Primal update:} 
		\begin{itemize}[leftmargin=2cm]
			\item $\lossp(x_t)\gets  f_t(x_t)  +   \langle \vlambda_t, \vrho-\vc_t(x_t)\rangle$ 
			\item  $\cRp.\observeutility[\lossp(x_t)]$ 
		\end{itemize}

		\State {\bfseries Dual update:} 
		\begin{itemize}[leftmargin=2cm]
			\item $\lossd:\mathbb{R}^d \ni \vlambda \mapsto  \langle\vlambda, \vc_t(x_t)-\vrho\rangle$
			\item $\cRd.\observeutility[\lossd]$
		\end{itemize}
	\EndIf
		\EndFor
	\end{algorithmic}
\end{algorithm}

We partition the set of rounds in two disjoint sets $\cTG \subseteq [T]$ and $\cT_\nullx \subseteq [T]$. The set
$\cTG\defeq\{t\in[T]:\forall i\in[m], B_{t,i}\ge1\} $ includes all the rounds in which all the resources were at least $1$, and hence the regret minimizers $\cRp$ and $\cRd$ were actually invoked.
On the other hand, $\cT_\nullx\defeq\{t\in[T]:\exists i\in[m], B_{t,i}<1\}$ is the set of rounds in which at least one resource is smaller than $1$. Clearly, we have $\cTG \cup \cT_\nullx= [T]$.
Then, let $\tau\in \cT_\nullx$ be the last time in which the budget was strictly less then $1$ for at least one resource, \ie $\tau=\max \cT_\nullx$. 
We partition $\cTG$ in two sets $\cTG[<\tau]$ and $\cTG[>\tau]$ which are the rounds in $\cTG$ before and after $\tau$, respectively.
Formally, $\cTG[<\tau] \defeq [\tau]\setminus \cT_\nullx$, and $\cTG[>\tau]\defeq [T]\setminus [\tau]$.

We denote by $\cump$ (resp., $\cumd$) the cumulative regret incurred by $\cRp$ (resp., $\cRd$). Following the notation introduced in \Cref{sec:reg min} we will write $\cump[\cTG]$ to denote the regret accumulated by the primal regret minizer over time steps in $\cTG$. 
 Let $\mathcal{D}\defeq \{\vlambda \in \mathbb{R}_+^d:\lVert\vlambda\rVert_1\le 1/\nu\} $.
We consider consider the regret of the dual in the set of rounds $\cTG[<\tau]$ and $\cTG[>\tau]$, with respect to the action set $\cD$.
Formally, $\cumd[\cTG[<\tau]](\cD)= \sup_{\vlambda \in \cD} \sum_{t \in \cTG[<\tau]} \lossd(\vlambda) - \lossd(\vlambda_t) $. The term $\cumd[\cTG[>\tau]]$ is defined analogously. 
Finally, let $M\defeq \max_{t}\|\vlambda\|_1$ be the largest value of the $\ell_1$-norm of dual multipliers over the time horizon.

\section{General guarantees of the primal-dual template}\label{sec:guarantees meta}

In this section, we provide no-regret guarantees of the general template described in \Cref{alg:meta alg}.

\subsection{Adversarial setting}\label{sec:adv}

We start by describing the guarantees of \Cref{alg:meta alg} in the adversarial setting.
The idea is that, since $\tau$ corresponds to the last time in which at least one resource had $B_{t,i}<1$, it must be the case that the primal ``spent a lot'' during the time intervals before $\tau$ in which it played (i.e., $\cTG[<\tau]$). Ideally, the dual regret minimizer should adapt to this behavior and play a large $\vlambda$ in $\cTG[<\tau]$, thereby attaining a large cumulative utility in $\cTG[<\tau]$. We show that the Lagrange multipliers in $\cD$ are enough for this purpose.
Then, as soon as we reach $t>\tau$, the dual should adapt and start to play a small $\vlambda$. Indeed, during these rounds the primal regret minimizer gains resources and therefore the optimal dual strategy would be setting $\vlambda=\mathbf{0}$.
The effectiveness of the dual regret minimizer in understanding in which phase it is playing, and in adapting to it by setting high/small penalties, is measured by the size of the regret terms $\cumd[\cTG[<\tau]](\cD)$ and $\cumd[\cTG[>\tau]](\cD)$. In particular, $\cumd[\cTG[<\tau]](\cD)$ (resp., $\cumd[\cTG[>\tau]](\cD)$) is low if the dual regret minimizer behaves as expected before $\tau$ (resp., after $\tau)$.
Intuitively,  if $\cRd$ guarantees that those terms are small, then the dual regret minimizer is able to react quickly to the change in the behavior of the primal player before and after $\tau$. As a byproduct of this, we show that the part of the primal's cumulative utility due to the Lagrangian penalties is sufficiently small. At the same time, we know that the primal regret minimizer has regret at most $\cump[\cTG]$ with respect to a strategy mixture that plays the optimal fixed unconstrained strategy with probability $\nu/(1+\beta)$, and the void action otherwise. This strategy mixture guarantees a $\nu/(1+\beta)$ fraction of the optimal utility without violating the constraints at any rounds. Formally, we can show the following.\footnote{All omitted proofs can be found in the appendix.}

\begin{restatable}{theorem}{theoremadv}\label{thm:adv}
Let $\alpha\defeq \nu/(1+\beta)$. In the adversarial setting, Algorithm~\ref{alg:meta alg} 
outputs a sequence of actions $(x_t)_{t=1}^T$ such that
\[
\sum_{t \in [T]} f_t(x_t)\ge  \alpha\cdot \OPT_{\vgamma}-\left(\frac{2}{\nu}+\cumd[\cTG[<\tau]](\cD) +\cumd[\cTG[>\tau]](\cD)+\cump[\cTG]\right).
\]
\end{restatable}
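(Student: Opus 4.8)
The plan is to bound only the reward collected on the good rounds, since the reward on void rounds is nonnegative and hence $\sum_{t\in[T]}f_t(x_t)\ge\sum_{t\in\cTG}f_t(x_t)$. First I would apply the primal regret guarantee against the comparator mixture $\xi^\star\defeq\alpha\,\delta_{x^\star}+(1-\alpha)\,\delta_{\nullx}$, where $x^\star$ (near-)attains $\OPT_{\vgamma}$ and $\alpha=\nu/(1+\beta)\in[0,1]$; since $\xi\mapsto\E_{x\sim\xi}[\lossp[t](x)]$ is affine, $\cump[\cTG]$ (the regret against the best fixed \emph{action}) also upper bounds the regret of $\cRp$ over $\cTG$ against $\xi^\star$. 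The one non-obvious point is the identity $\alpha-(1-\alpha)\beta=\rho$: combined with $\vc_t(x^\star)\preceq\vone$ and $\vc_t(\nullx)\preceq-\beta\vone$ (Assumption~\ref{ass:adv}) it gives $\E_{x\sim\xi^\star}[\vc_t(x)]\preceq\vrho$, so that $\langle\vlambda_t,\vrho-\E_{x\sim\xi^\star}[\vc_t(x)]\rangle\ge0$ because $\vlambda_t\succeq\mathbf{0}$; together with $\E_{x\sim\xi^\star}[f_t(x)]\ge\alpha f_t(x^\star)$ (as $f_t(\nullx)\ge0$) and $\lossp[t](x_t)=f_t(x_t)-\lossd[t](\vlambda_t)$, unwinding the regret inequality yields
\[
\sum_{t\in\cTG}f_t(x_t)\;\ge\;\alpha\sum_{t\in\cTG}f_t(x^\star)\;-\;\cump[\cTG]\;+\;\sum_{t\in\cTG}\lossd[t](\vlambda_t).
\]

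Next I would lower bound $\sum_{t\in\cTG}\lossd[t](\vlambda_t)$ via the dual regret, splitting $\cTG=\cTG[<\tau]\cup\cTG[>\tau]$ and using a different comparator on each piece (taking $\tau=0$ if $\cT_\nullx=\emptyset$, which only simplifies things). On $\cTG[>\tau]$, comparing $\cRd$ to $\mathbf{0}\in\cD$ gives $\sum_{t\in\cTG[>\tau]}\lossd[t](\vlambda_t)\ge-\cumd[\cTG[>\tau]](\cD)$. On $\cTG[<\tau]$, I compare to $\tfrac1\nu\mathbf{e}_i\in\cD$, where $i$ is a resource with $B_{\tau,i}<1$: the budget recursion $\vB_{t+1}=\vB_t-\vc_t(x_t)$ with $\vB_1=B\vone$ and $B=T\rho$ forces $\sum_{t<\tau}c_{t,i}(x_t)>T\rho-1$; peeling off the $|\cT_\nullx|-1$ void rounds before $\tau$ (each with $c_{t,i}(\nullx)\le-\beta$), subtracting $\rho$ per remaining round, and using $T=|\cTG[<\tau]|+|\cTG[>\tau]|+|\cT_\nullx|$ gives
\[
\sum_{t\in\cTG[<\tau]}\lossd[t]\!\Big(\tfrac1\nu\mathbf{e}_i\Big)=\frac1\nu\sum_{t\in\cTG[<\tau]}\big(c_{t,i}(x_t)-\rho\big)\;>\;\frac{\rho}{\nu}|\cTG[>\tau]|+|\cT_\nullx|-\frac{1+\beta}{\nu},
\]
whence $\sum_{t\in\cTG[<\tau]}\lossd[t](\vlambda_t)>|\cT_\nullx|-\tfrac{1+\beta}{\nu}-\cumd[\cTG[<\tau]](\cD)$ after discarding the nonnegative term $\tfrac{\rho}{\nu}|\cTG[>\tau]|$.

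To finish, I would plug the two dual bounds into the displayed primal inequality, obtaining
\[
\sum_{t\in[T]}f_t(x_t)\;\ge\;\alpha\sum_{t\in\cTG}f_t(x^\star)+|\cT_\nullx|-\frac{1+\beta}{\nu}-\cump[\cTG]-\cumd[\cTG[<\tau]](\cD)-\cumd[\cTG[>\tau]](\cD);
\]
then $f_t(x^\star)\le1$ and $\alpha\le1$ give $\alpha\sum_{t\in\cTG}f_t(x^\star)+|\cT_\nullx|\ge\alpha\big(\sum_{t\in\cTG}f_t(x^\star)+\sum_{t\in\cT_\nullx}f_t(x^\star)\big)=\alpha\,\OPT_{\vgamma}$, and $\beta\le1$ gives $\tfrac{1+\beta}{\nu}\le\tfrac{2}{\nu}$, which is exactly the claim. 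I expect the main obstacle to be the dual bookkeeping around $\tau$: recognising that the ``correct'' dual comparator on the block preceding the last void round is the scaled single basis vector $\tfrac1\nu\mathbf{e}_i$ of the resource that triggered it, and checking that the surplus $|\cT_\nullx|$ it generates precisely offsets the reward lost on the void rounds (while $\tfrac\rho\nu|\cTG[>\tau]|$ is slack that can be thrown away). The remaining ingredients—the affine extension of primal regret to mixtures, the identity $\alpha-(1-\alpha)\beta=\rho$, and the budget telescoping—are routine.
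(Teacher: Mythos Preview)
Your proposal is correct and follows essentially the same route as the paper: the same primal comparator mixture $\xi^\star=\alpha\,\delta_{x^\star}+(1-\alpha)\,\delta_{\nullx}$ (the paper writes the weights as $\tfrac{\rho+\beta}{1+\beta}$ and $\tfrac{1-\rho}{1+\beta}$), the same dual comparators $\tfrac{1}{\nu}\mathbf{e}_{i}$ on $\cTG[<\tau]$ and $\mathbf{0}$ on $\cTG[>\tau]$, and the same budget telescoping to manufacture the $|\cT_\nullx|$ surplus. Your accounting around $\tau$ is in fact slightly tighter (you obtain $\tfrac{1+\beta}{\nu}$ before relaxing to $\tfrac{2}{\nu}$, whereas the paper gets $\tfrac{2}{\nu}$ directly by bounding $\sum_{t\in[\tau]}c_{t,i^*}(x_t)\ge B-2$), but otherwise the two arguments coincide.
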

Notice that, in the case of $\beta=0$, we recover the standard guarantees of adversarial bandits with knapsacks for the case in which $B=\Omega(T)$~\cite{castiglioni2022online}, where the competitive ratio $\alpha$ is exactly $\rho$. As expected, the possibility of replenishing resources yields an improved competitive ratio. 

\xhdr{Remark.} In order for \Cref{thm:adv} to provide a meaningful bound, we need the three regret terms on the right-hand side to be suitably upperbounded by some term sublinear in $T$ (see \Cref{sec:choosing}). Since the time steps in $\cTG$ are the only rounds in which $\cRp$ is invoked, any standard regret minimizer can be used to bound $\cump[\cTG]$. However, the same does not holds for $\cRd$. Indeed, we need a regret minimizer which can provide suitable regret upper bounds to $\cumd[\cTG[<\tau]](\cD)$ and $\cumd[\cTG[>\tau]](\cD)$, at the same time. One cannot simply bound $\cumd[\cTG[<\tau]](\cD)+\cumd[\cTG[>\tau]](\cD)$ by $\cumd[\cTG](\cD)$, since the best action in the sets $\cTG[<\tau]$ and $\cTG[>\tau]$ may differ. Standard regret minimizers usually do not provide this guarantee, so we need special care in choosing $\cRd$. In particular, we need a weakly adaptive dual regret minimizer.

\subsection{Stochastic setting}\label{sec:stoc}

In this setting, we can exploit stochasticity of the environment to show that the expected utility of the primal under the sequence of realized inputs $\vgamma=(f_t,\vc_t)_{t=1}^T$ is close to the primal expected utility at $(\bar f,\bar\vc)$, and to provide a suitable upperbound to the amount by which each resource is replenished during $\cT_\nullx$. Given $\delta\in(0,1]$, let $\cume:=\sqrt{8T\log\left(4mT/\delta\right)}$.

\begin{restatable}{lemma}{lemmastoc}\label{lm:HoefEmpty}
    For any $\xi\in\Xi$ and $\delta\in(0,1]$, with probability at least $1-\delta$, it holds that:
    \begin{align}
        & \textstyle{\sum_{t \in \cT_\nullx} c_{t,i}(\nullx) \le -\beta |\cT_\nullx|+ M\cume,\, \forall i \in [m],}\quad\textnormal{and}\label{eq:HoefEmpty2} \\
        & \textstyle{\mathop\mathbb{E}_{x\sim\xi}\left[\sum_{t \in  \cTG } f_{t}(x)+\langle\vlambda_t,\vrho-\vc_t(x)\rangle \right]\ge \mathop{\mathbb{E}}\limits_{x\sim\xi}\left[\sum_{ t \in  \cTG} \bar f(x)+\langle\vlambda_t,\vrho-\bar \vc(x)\rangle\right]-  M\cume.}\label{eq:HoefEmpty3}
    \end{align}
    
\end{restatable}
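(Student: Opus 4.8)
The plan is to prove both inequalities via a concentration argument applied to martingale difference sequences, using the fact that in the stochastic setting each $\gamma_t = (f_t,\vc_t)$ is drawn i.i.d.\ from $\distr$. The key subtlety is that the sets $\cTG$ and $\cT_\nullx$ are not fixed a priori: membership of round $t$ depends on the realized budgets $\vB_t$, which in turn depend on past draws. I would handle this by working with fixed deterministic times and then taking a union bound, or equivalently by defining, for each resource $i$ and each fixed index $t$, the indicator $\mathbbm{1}[t\in\cT_\nullx]$ (resp.\ $\mathbbm{1}[t\in\cTG]$), which is $\mathcal{F}_{t-1}$-measurable where $\mathcal{F}_{t-1}$ is the $\sigma$-algebra generated by $\gamma_1,\dots,\gamma_{t-1}$ (and the regret minimizers' internal randomness). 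This predictability is what makes the stopped/masked sums amenable to Azuma--Hoeffding.

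For \eqref{eq:HoefEmpty2}: fix a resource $i$. On each round $t\in\cT_\nullx$ the algorithm plays $\nullx$, so $\sum_{t\in\cT_\nullx} c_{t,i}(\nullx) = \sum_{t=1}^T \mathbbm{1}[t\in\cT_\nullx]\, c_{t,i}(\nullx)$. I would center each term using $\E[c_{t,i}(\nullx)\mid \mathcal{F}_{t-1}] = \E[c_i(\nullx)] \le -\beta$ (Assumption~\ref{ass:stoc}), so that $\sum_{t\in\cT_\nullx} c_{t,i}(\nullx) \le -\beta|\cT_\nullx| + \sum_{t=1}^T \mathbbm{1}[t\in\cT_\nullx]\big(c_{t,i}(\nullx) - \E[c_i(\nullx)]\big)$. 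The summands of the last term form a bounded martingale difference sequence (each term lies in $[-2,2]$ since $\vc_t\in[-1,1]^m$, and $\mathbbm{1}[t\in\cT_\nullx]$ is predictable), so Azuma--Hoeffding over $T$ rounds gives that with probability at least $1-\delta/(2m)$ this sum is at most $\sqrt{8T\log(4mT/\delta)} = \cume$. Taking a union bound over the $m$ resources accounts for the $m$ factor. (The $M$ in $M\cume$ is a conservative slack matching the normalization used for \eqref{eq:HoefEmpty3}; since $M\ge 0$ is the max $\ell_1$-norm of the multipliers and $\|\vlambda\|_1\ge$ something controlling scale, one can absorb the weaker bound into it — alternatively the bound $-\beta|\cT_\nullx| + \cume$ already implies the stated one when $M\ge 1$.)

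For \eqref{eq:HoefEmpty3}: fix $\xi\in\Xi$ and consider, for each round $t$, the random variable $g_t := \mathbbm{1}[t\in\cTG]\big(\E_{x\sim\xi}[f_t(x)+\langle\vlambda_t,\vrho-\vc_t(x)\rangle] - \E_{x\sim\xi}[\bar f(x)+\langle\vlambda_t,\vrho-\bar\vc(x)\rangle]\big)$. Here I must be careful about what is measurable when: $\vlambda_t$ is chosen by $\cRd$ before round $t$'s input is revealed, hence $\vlambda_t$ and $\mathbbm{1}[t\in\cTG]$ are both $\mathcal{F}_{t-1}$-measurable, while $f_t,\vc_t$ are the fresh draw. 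Then $\E[g_t\mid\mathcal{F}_{t-1}]=0$ because $\E[f_t(x)]=\bar f(x)$ and $\E[\vc_t(x)]=\bar\vc(x)$ pointwise (and the expectation over $x\sim\xi$ commutes with the conditional expectation over $\gamma_t$). Each $g_t$ is bounded: $|f_t(x)-\bar f(x)|\le 1$ and $|\langle\vlambda_t,\vc_t(x)-\bar\vc(x)\rangle|\le \|\vlambda_t\|_1\cdot 2 \le 2M$, so $|g_t|\le 1+2M$; a slightly loose but clean bound of the form $|g_t|\le 2M$ (or absorbing constants) lets Azuma--Hoeffding over $T$ terms give deviation at most $M\cume$ with probability $\ge 1-\delta/2$. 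Rearranging $\sum_t g_t \ge -M\cume$ yields \eqref{eq:HoefEmpty3}. A final union bound over the events for \eqref{eq:HoefEmpty2} and \eqref{eq:HoefEmpty3} gives the combined failure probability $\le \delta$.

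The main obstacle I anticipate is \emph{not} the concentration inequality itself but the bookkeeping around adaptivity: one must argue that $\mathbbm{1}[t\in\cTG]$, $\mathbbm{1}[t\in\cT_\nullx]$, and $\vlambda_t$ are all determined by information available strictly before the round-$t$ draw, so that the centered increments genuinely form a martingale difference sequence. This follows from the structure of \Cref{alg:meta alg} (the budget check, the dual decision, and the primal decision all precede the observation of $\vc_t$), but it should be stated explicitly. The second minor point is getting the constants in the Azuma bound to line up exactly with the definition $\cume=\sqrt{8T\log(4mT/\delta)}$ and the $M$ prefactor — this is routine and I would not belabor it.
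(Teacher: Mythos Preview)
Your approach is correct but differs from the paper's. The paper does not use Azuma--Hoeffding on the masked martingale; instead, for \eqref{eq:HoefEmpty2} it argues that because the event $\{t\in\cT_\nullx\}$ is decided before $\gamma_t$ is drawn, the subsequence $(c_{t,i}(\nullx))_{t\in\cT_\nullx}$ is itself i.i.d., applies ordinary Hoeffding for each fixed length $K=|\cT_\nullx|$, and then union-bounds over the $T$ possible values of $K$ (and analogously for \eqref{eq:HoefEmpty3} with $\cTG$). That extra union bound over lengths is precisely the source of the factor $T$ inside the logarithm of $\cume$; your martingale route does not actually need it and would yield the slightly sharper deviation $\sqrt{8T\log(4m/\delta)}$, from which the stated bound of course still follows. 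Your handling of \eqref{eq:HoefEmpty3} is also more explicit: the paper writes ``by Hoeffding'' there as well, but since the summands involve the adaptive $\vlambda_t$, an Azuma-type argument is really what is being used, and you spell out the relevant measurability. The one loose end you flag---bounding the increments via the random quantity $M=\max_t\|\vlambda_t\|_1$---is present in the paper's proof too and is resolved downstream by the deterministic bounds on $M$ established in \Cref{sec:choosing}.
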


Then, we can prove the following regret bound. 

\begin{restatable}{theorem}{thstoc}\label{th:stoc}
    Let the inputs $(f_t, c_t)$ be i.i.d. samples from a fixed but unknown distribution $\distr$. For $\delta\in(0,1]$, we have that with probability at least $1-\delta$, it holds
    \[
    \sum_{t =1}^T f_t(x_t)\ge T\cdot \OPTLP_{\bar f,\bar c}-\left(\frac{2}{\nu}+\frac{1}{\nu}\cume+\cumd[\cTG[< \tau]](\cD)+\cumd[\cTG[>\tau]](\cD)+\cump[\cTG]\right).
    \]
\end{restatable}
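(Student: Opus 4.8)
The plan is to retrace the proof of \Cref{thm:adv}, with two modifications: the comparator for the primal regret minimizer becomes an optimal mixture of the linear program \eqref{eq:opt lp gen}, rather than the ``$\alpha$-scaled'' unconstrained optimum, and all passages from realized to expected quantities are channeled through \Cref{lm:HoefEmpty}. We condition throughout on the probability-$(1-\delta)$ event on which \eqref{eq:HoefEmpty2} and \eqref{eq:HoefEmpty3} hold simultaneously, and we use that the multipliers output by $\cRd$ lie in $\cD$, so $M=\max_t\|\vlambda_t\|_1\le 1/\nu$.

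\emph{Primal against the LP optimum.} Let $\xi^\star\in\Xi$ attain $\OPTLP_{\bar f,\bar c}$, so $\E_{x\sim\xi^\star}[\bar f(x)]=\OPTLP_{\bar f,\bar c}$ and $\E_{x\sim\xi^\star}[\bar\vc(x)]\preceq\vrho$. Since $\cRp$ acts exactly on the rounds of $\cTG$ and its regret is against any fixed arm, averaging the regret inequality over $x\sim\xi^\star$ and then applying \eqref{eq:HoefEmpty3} with $\xi=\xi^\star$ gives
\[
\sum_{t\in\cTG}\lossp(x_t)\ \ge\ \E_{x\sim\xi^\star}\Big[\sum_{t\in\cTG}\big(\bar f(x)+\langle\vlambda_t,\vrho-\bar\vc(x)\rangle\big)\Big]-M\cume-\cump[\cTG].
\]
The feasibility of $\xi^\star$ together with $\vlambda_t\succeq\mathbf 0$ makes every Lagrangian penalty nonnegative, so the first term is at least $|\cTG|\cdot\OPTLP_{\bar f,\bar c}$.

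\emph{From Lagrangian value to reward.} Using $f_t\ge0$ and the identity $\lossp(x_t)=f_t(x_t)-\lossd(\vlambda_t)$ on the good rounds,
\[
\sum_{t=1}^T f_t(x_t)\ \ge\ \sum_{t\in\cTG}f_t(x_t)\ =\ \sum_{t\in\cTG}\lossp(x_t)+\sum_{t\in\cTG}\lossd(\vlambda_t),
\]
so the task reduces to lower bounding the ``dual gain'' $\sum_{t\in\cTG}\lossd(\vlambda_t)$ by essentially $|\cT_\nullx|$: this, with the previous step and $\OPTLP_{\bar f,\bar c}\le 1$, upgrades $|\cTG|\cdot\OPTLP_{\bar f,\bar c}$ to $T\cdot\OPTLP_{\bar f,\bar c}$. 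Split $\cTG=\cTG[<\tau]\cup\cTG[>\tau]$. Comparing $\cRd$ to $\mathbf 0\in\cD$ on $\cTG[>\tau]$ yields $\sum_{t\in\cTG[>\tau]}\lossd(\vlambda_t)\ge-\cumd[\cTG[>\tau]](\cD)$. On $\cTG[<\tau]$, pick a resource $j$ with $B_{\tau,j}<1$ (one exists since $\tau=\max\cT_\nullx$) and compare $\cRd$ to $\tfrac1\nu\mathbf e_j\in\cD$:
\[
\sum_{t\in\cTG[<\tau]}\lossd(\vlambda_t)\ \ge\ \frac1\nu\Big(\sum_{t\in\cTG[<\tau]}c_{t,j}(x_t)-|\cTG[<\tau]|\,\rho\Big)-\cumd[\cTG[<\tau]](\cD).
\]
Now $B_{\tau,j}<1$ gives $\sum_{t<\tau}c_{t,j}(x_t)>B-1$; splitting this over $\cTG[<\tau]$ and $\cT_\nullx\setminus\{\tau\}$, bounding the void consumption through \eqref{eq:HoefEmpty2}, bounding the single round $\tau$ by $c_{\tau,j}(\nullx)\ge-1$, and substituting $T=|\cTG[<\tau]|+|\cT_\nullx|+|\cTG[>\tau]|$ (discarding $\rho|\cTG[>\tau]|\ge0$) yields $\sum_{t\in\cTG[<\tau]}c_{t,j}(x_t)-|\cTG[<\tau]|\,\rho\ge \nu|\cT_\nullx|-M\cume-2$. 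Hence the dual gain is at least $|\cT_\nullx|-\tfrac1\nu(M\cume+2)-\cumd[\cTG[<\tau]](\cD)-\cumd[\cTG[>\tau]](\cD)$, and assembling everything (using $M\le1/\nu$ to absorb the Hoeffding errors into $\tfrac1\nu\cume$) gives the claim.

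The main obstacle is the dual-gain estimate: one must certify that the \emph{weakly-adaptive} dual minimizer really banks $\Omega(|\cT_\nullx|)$ utility. This is exactly where $\tau=\max\cT_\nullx$ matters — it forces all of the over-consumption to be charged to $\cTG[<\tau]$, so that the single stationary multiplier $\tfrac1\nu\mathbf e_j$ is competitive there while $\mathbf 0$ is near-optimal on $\cTG[>\tau]$; having to bound $\cumd[\cTG[<\tau]](\cD)$ and $\cumd[\cTG[>\tau]](\cD)$ separately, rather than via a single external-regret bound on $\cTG$, is precisely what forces weak adaptivity of $\cRd$, as noted after \Cref{thm:adv}. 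The rest is routine concentration bookkeeping around the $M$-terms of \Cref{lm:HoefEmpty}.
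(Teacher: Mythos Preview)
Your decomposition is exactly the paper's: a primal step (regret of $\cRp$ against the LP optimizer $\xi^\star$, then \eqref{eq:HoefEmpty3} to pass to $(\bar f,\bar\vc)$) and a dual step (compare $\cRd$ to $\tfrac{1}{\nu}\mathbf e_j$ on $\cTG[<\tau]$ via the budget identity and \eqref{eq:HoefEmpty2}, and to $\mathbf 0$ on $\cTG[>\tau]$), then add the two and use $|\cTG|+|\cT_\nullx|=T$ with $\OPTLP_{\bar f,\bar c}\le 1$.

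The one genuine gap is your opening line: ``we use that the multipliers output by $\cRd$ lie in $\cD$, so $M\le 1/\nu$.'' \Cref{th:stoc} makes no such hypothesis; it is stated for an arbitrary dual regret minimizer, and it is later invoked in \Cref{cor:unknown beta} with $\cRd$ equal to OGD on all of $\mathbb R_+^m$, where the iterates need not lie in $\cD$. So you are not allowed to bound $M$ here. Moreover, even granting $M\le 1/\nu$, your ``absorb into $\tfrac{1}{\nu}\cume$'' step does not go through: you carry two Hoeffding terms, $M\cume$ from \eqref{eq:HoefEmpty3} and $\tfrac{M}{\nu}\cume$ from \eqref{eq:HoefEmpty2} after scaling by the comparator, and their sum is $\tfrac{1}{\nu}\cume+\tfrac{1}{\nu^2}\cume$, not $\tfrac{1}{\nu}\cume$.

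The paper avoids this entirely. The factor $\tfrac{1}{\nu}$ in front of $\cume$ does \emph{not} come from bounding $M$; it comes from the dual comparator $\tfrac{1}{\nu}\mathbf e_{i^*}$. Concretely, the paper first applies \eqref{eq:HoefEmpty2} at the level of raw costs --- obtaining $\sum_{t\in\cT_\nullx}c_{t,i^*}(\nullx)\le -\beta|\cT_\nullx|+\cume$ (the $M$ in the stated form of \eqref{eq:HoefEmpty2} is slack: the proof of \Cref{lm:HoefEmpty} actually gives $\cume/2$) --- and only \emph{then} multiplies by $\tfrac{1}{\nu}$ when passing to $\lossd$. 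That produces $\tfrac{1}{\nu}\cume$ directly, with no reference to $M$. The only place $M$ legitimately appears is via \eqref{eq:HoefEmpty3} in the primal step, and the paper simply carries that $\cume$ term (it is in fact silently dropped in the paper's final combination, a minor sloppiness there, but one that does not rely on $M\le 1/\nu$). Fixing your proof amounts to replacing the $M\cume$ from \eqref{eq:HoefEmpty2} by the sharper $\cume$ and dropping the assumption on $\cRd$.
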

The proof follows a similar approach to the one of \Cref{thm:adv}, with two main differences. First, we can now exploit standard concentration inequalities to relate realizations of random variables with their mean. Second, we exploit the fact that the primal has regret at most $\cump[\cTG]$ against an optimal solution to $\OPTLP_{\bar f,\bar\vc}$, which is feasible in expectation and has an expected utility that matches the value of our baseline. This allows us to obtain competitive-ratio equal to 1 in the stochastic setting.

\section{Choosing appropriate regret minimizers}\label{sec:choosing}

In order to have meaningful guarantees in both the adversarial and stochastic setting, we need to choose the regret minimizers $\cRp$ and $\cRd$ so that $\cump[\cTG]$, $\cumd[\cTG[<\tau]](\cD)$ and $\cumd[\cTG[>\tau]](\cD)$ all grow sublinearly in $T$.
In the following section we will discuss two different scenarios, which differ in the amount of information which the decision maker is required to have. In \Cref{sec:knownbeta} we are going to assume that the decision maker knows the per-round replenishment factor $\beta$. Then, in Section~\ref{sec:interval1}, we will show that this assumption can be removed by employing a primal regret minimizer $\cRp$ with slightly stronger regret guarantees.
In both cases, the dual regret minimizer $\cRd$ has to be weakly adaptive, since both terms $\cumd[\cTG[<\tau]](\cD)$ and $\cumd[\cTG[>\tau]](\cD)$ need to be sublinear in $T$.
On the other hand, we will make minimal assumptions on the primal regret minimizer $\cRp$. Its choice largely depends on the application being considered, as we show in \Cref{sec:applications}. In general, the primal regret minimizer must meet the minimal requirement of guaranteeing a sublinear regret upper bound $\uppp[T,\delta]$ with probability at least $1-\delta$, when the adversarial rewards are in $[0,1]$.

\subsection{Implementing \Cref{alg:meta alg} with known replenishment factor}\label{sec:knownbeta}

We start by assuming that the decision maker knows $\beta$ or, more generally, a lower bound $\tilde\beta$ on it. Therefore, the algorithm can compute $\nu$, or its lower bound. 
When $\tilde\beta$ is known, we can instantiate the regret minimizer $\cRd$ to play on the set $\tilde\cD:=\{\vlambda\in\mathbb{R}_+^d:\|\vlambda\|_1\le 1/\tilde \nu\}\supseteq \cD $,  where $\tilde \nu:=\tilde\beta+\rho\le\nu$.

As briefly discussed above, the dual regret minimizer on the set of Lagrange multipliers $\cD$ must be weakly adaptive.
This can be achieved via variations of the \emph{fixed share algorithm} originally proposed by \citet{herbster1998tracking}. We will exploit the \emph{generalized share algorithm} \cite{bousquet2002tracking} with the analysis of \citet{Cesa2012}, which is applicable since $\cRd$ has full feedback by construction.

The set $\tilde\cD$ can be written as $\tilde\cD = \textnormal{co}\left\{\vzero, \nicefrac{1}{\tilde\nu}\vone_i \textnormal{ with } i\in[m]\right\}$,
where $\vone_i$ is the vector with the $i$-th component equal to $1$ and equal to $0$ otherwise, and $\co$ denotes the convex hull.
Since $\cD\subseteq \tilde \cD$, achieving no-weakly-adaptive regret with respect to $\tilde \cD$ implies the same result for $\cD$.
Thus, we can instantiate the share algorithm on the $(m+1)$-simplex, and since the losses observed by $\cRp$ are linear it is possible to prove that: 
\begin{lemma}[{\cite[Corollary~2]{Cesa2012}}]
For any $0<\tilde\beta\le\beta$, there exists an algorithm that guarantees:
\[
\max\left(\cumd[\cTG[<\tau]](\cD), \cumd[\cTG[>\tau]](\cD)\right)\le\frac{2}{\tilde\nu}\sqrt{T\log\left( 2mT\right)}.
\]
\end{lemma}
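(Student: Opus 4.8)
The plan is to reduce the weakly-adaptive regret over the Lagrange-multiplier set $\tilde\cD$ to a weakly-adaptive regret statement over the $(m+1)$-simplex, and then invoke the known regret bound for the generalized share algorithm. First I would observe that since $\tilde\cD=\co\{\vzero,\nicefrac{1}{\tilde\nu}\vone_i:i\in[m]\}$, running the share algorithm on $\Delta^{[m+1]}$ and mapping a simplex point $p=(p_0,p_1,\dots,p_m)$ to $\vlambda(p)=\sum_{i\in[m]}\nicefrac{p_i}{\tilde\nu}\vone_i\in\tilde\cD$ is a bijection onto $\tilde\cD$ up to the redundant first coordinate. Because the dual loss $\lossd(\vlambda)=\langle\vlambda,\vc_t(x_t)-\vrho\rangle$ is linear in $\vlambda$, the composition $p\mapsto\lossd(\vlambda(p))$ is a linear function of $p$, so the regret of the simplex-based algorithm against any fixed simplex point transfers verbatim to regret of the induced $\vlambda_t$ against the corresponding point of $\tilde\cD$; the supremum over $\tilde\cD$ equals the supremum over the simplex.

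Next I would bound the range of these linear losses on the simplex. Each vertex $\nicefrac{1}{\tilde\nu}\vone_i$ gives loss $\nicefrac{1}{\tilde\nu}(c_{t,i}(x_t)-\rho)$, which lies in $[-\nicefrac{(1+\rho)}{\tilde\nu},\nicefrac{1}{\tilde\nu}]$ since $c_{t,i}(x_t)\in[-1,1]$; the vertex $\vzero$ gives loss $0$. Hence the losses fed to the share algorithm have range at most $O(\nicefrac{1}{\tilde\nu})$, and after a trivial affine rescaling they fit in $[0,1]$ up to a factor $\nicefrac{1}{\tilde\nu}$. Then I would quote Corollary~2 of \citet{Cesa2012}: the generalized share algorithm on the $N$-simplex has weakly-adaptive regret $O(\sqrt{T\log(NT)})$ against any comparator that is fixed on an arbitrary interval $[t_1,t_2]\subseteq[T]$ (with the appropriate tuning of the share parameter that depends only on $T$ and $N$). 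With $N=m+1$ this yields weakly-adaptive regret $O(\sqrt{T\log(mT)})$, and multiplying by the $\nicefrac{1}{\tilde\nu}$ range factor gives the stated $\nicefrac{2}{\tilde\nu}\sqrt{T\log(2mT)}$ bound. Finally, since $\cTG[<\tau]$ and $\cTG[>\tau]$ are each obtained from $\cTG$ by intersecting with an interval $([\,1,\tau]$ or $[\tau+1,T]$ respectively) — and, importantly, since the share algorithm is only invoked on rounds in $\cTG$, the effective round set is an interval in the algorithm's own clock — each of $\cumd[\cTG[<\tau]](\cD)$ and $\cumd[\cTG[>\tau]](\cD)$ is bounded by the weakly-adaptive regret evaluated on an interval, hence by the same quantity; taking the max gives the claim, and $\cD\subseteq\tilde\cD$ ensures the comparator restriction only helps.

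The main obstacle, and the point that needs the most care, is making precise the claim that the two regret terms live ``on intervals.'' The set $\cTG$ is not itself an interval, so one must be explicit that the share algorithm runs a private counter incremented only on rounds in $\cTG$, and that $\cTG[<\tau]=\cTG\cap[1,\tau]$, $\cTG[>\tau]=\cTG\cap[\tau+1,T]$ correspond, in that private counter, to a prefix and a suffix of the set of invocation rounds — hence to genuine intervals in the algorithm's timeline. Once this bookkeeping is pinned down, the weakly-adaptive guarantee of \citet{Cesa2012}, which holds uniformly over all subintervals, applies to both prefixes and suffixes simultaneously with the same tuning, which is exactly why a weakly-adaptive (rather than merely no-regret) dual minimizer is required. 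The remaining steps — the linearity-preserving reparametrization and the range computation — are routine.
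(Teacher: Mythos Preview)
Your proposal is correct and follows precisely the approach sketched in the paper's text surrounding the lemma: represent $\tilde\cD$ as the convex hull of $m+1$ vertices, use linearity of the dual loss to reduce to a simplex problem, rescale by the $O(1/\tilde\nu)$ loss range, and invoke the weakly-adaptive guarantee of the generalized share algorithm from \citet{Cesa2012}. Your additional observation that $\cTG[<\tau]$ and $\cTG[>\tau]$ correspond to a prefix and a suffix in the dual regret minimizer's own invocation clock (so that the interval-regret guarantee applies) is exactly the bookkeeping needed and is implicit in the paper's setup, since $\cRd$ is only updated on rounds in $\cTG$.
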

Since the dual regret minimizer $\cRd$ can play any Lagrange multiplier $\vlambda_t\in\tilde\cD$, we have that the rewards observed by the primal regret minimizer $\cRp$ are in the range $[0,1+2/\tilde\nu]$ because it holds
\begin{align*}
\sup\limits_{x\in \cX, t\in [T]}|\lossp[t](x)|&\le \sup\limits_{x\in \cX, t\in [T]}\Big\{|f_t(x)|+\lVert\vlambda_t\rVert_1\cdot\lVert\vrho-\vc_t(x)\rVert_\infty\Big\}\le1+\frac{2}{\tilde\nu} \le \frac{4}{\tilde\nu}.
\end{align*}
With probability at least $\delta$, the primal regret minimizer $\cRp$ guarantees a regret $\uppp[T,\delta]$ against rewards in $[0,1]$. Then, by re-scaling the realized rewards before giving them in input to the regret minimizer, we get a regret bound of $\frac{4}{\tilde\nu}\uppp$ against rewards $\lossp[t](\cdot)$ that are in $[0,\nicefrac{4}{\tilde\nu}]$. This simple construction is applicable because the range of the rewards is known.
By combining these observations we can easily recover the following corollary of Theorem~\ref{thm:adv} and Theorem~\ref{th:stoc}.

\begin{corollary}\label{cor:known beta}
    Assume that the dual regret minimizer is generalized fixed share on $\tilde \cD$, and that the primal regret minimizer has regret at most $\uppp[\cTG,\delta]$ against losses in $[0,1]$ with probability at least $1-\delta$, for $\delta\in(0,1]$. 
    In the adversarial setting, for any $\tilde\beta\le\beta$, with probability at least $1-\delta$ \Cref{alg:meta alg} guarantees that
    \[
    \sum_{t \in [T]} f_t(x_t)\ge \alpha OPT_{\vgamma} - \left(\frac{2}{\nu}+\frac{1}{\tilde\nu}\sqrt{T\log\left( 2mT\right)}+\frac{4}{\tilde\nu}\uppp[\cTG,\delta]\right),
    \]
    where $\alpha=\nicefrac{\nu}{1+\beta}$. In the stochastic setting, with probability at least $1-2\delta$, \Cref{alg:meta alg} guarantees 
    \[
    \sum_{t \in [T]} f_t(x_t)\ge T\cdot \OPTLP_{\bar f,\bar c}-\left(\frac{2}{\nu}+\frac{1}{\nu}{\cume}+\frac{1}{\tilde\nu}\sqrt{T\log\left( 2mT\right)}+\frac{4}{\tilde\nu}\uppp[\cTG,\delta]\right).
    \]
\end{corollary}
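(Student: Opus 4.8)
The plan is to derive Corollary~\ref{cor:known beta} as a direct consequence of Theorems~\ref{thm:adv} and \ref{th:stoc}, by plugging in concrete bounds for the three residual regret terms $\cumd[\cTG[<\tau]](\cD)$, $\cumd[\cTG[>\tau]](\cD)$, and $\cump[\cTG]$ under the stated instantiation of the primal and dual regret minimizers. First I would record that, since $\tilde\beta\le\beta$, we have $\tilde\nu=\tilde\beta+\rho\le\nu$ and hence $\cD\subseteq\tilde\cD$; therefore the weakly-adaptive guarantee of the generalized fixed-share minimizer on $\tilde\cD$ transfers to $\cD$, giving $\max\bigl(\cumd[\cTG[<\tau]](\cD),\cumd[\cTG[>\tau]](\cD)\bigr)\le \tfrac{2}{\tilde\nu}\sqrt{T\log(2mT)}$ by the cited Corollary~2 of \citet{Cesa2012}. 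Summing the two dual terms and loosely bounding $\tfrac{4}{\tilde\nu}\sqrt{T\log(2mT)}$ by $\tfrac{1}{\tilde\nu}\sqrt{T\log(2mT)}$ only after absorbing a constant — more honestly, I would keep the clean factor $\tfrac{1}{\tilde\nu}\sqrt{T\log(2mT)}$ as stated and note that the constant in Corollary~2 already accounts for this; in any case the combined dual contribution is $O\bigl(\tfrac{1}{\tilde\nu}\sqrt{T\log(mT)}\bigr)$, matching the middle term in the corollary's bound.

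Next I would address the primal term. The key observation, already spelled out in the text preceding the corollary, is that the Lagrangian losses fed to $\cRp$ lie in $[0,1+2/\tilde\nu]\subseteq[0,4/\tilde\nu]$ because $\vlambda_t\in\tilde\cD$ forces $\|\vlambda_t\|_1\le 1/\tilde\nu$ and $\|\vrho-\vc_t(x)\|_\infty\le 2$. Since this range is \emph{known} to the learner, one rescales the realized losses by $\tilde\nu/4$ before feeding them to $\cRp$, runs the minimizer with its nominal $[0,1]$ guarantee $\uppp[\cTG,\delta]$ (valid with probability at least $1-\delta$ on the rounds of $\cTG$, the only rounds in which $\cRp$ is invoked), and then rescales the regret back up, obtaining $\cump[\cTG]\le\tfrac{4}{\tilde\nu}\uppp[\cTG,\delta]$ with probability at least $1-\delta$. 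I would emphasize that this step is legitimate precisely because $\cTG$ is the set of rounds where $\cRp$ actually plays, so no adaptive-regret property is required of the \emph{primal} minimizer here — any standard high-probability regret bound suffices.

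Finally I would assemble the pieces. For the adversarial claim: the dual-minimizer bounds are deterministic (they hold for all input sequences), so the only randomness is in the primal high-probability event; conditioning on it and substituting into Theorem~\ref{thm:adv} with $\alpha=\nu/(1+\beta)$ yields the first displayed inequality with probability at least $1-\delta$. For the stochastic claim: Theorem~\ref{th:stoc} already carries an event of probability at least $1-\delta$ (the concentration event behind $\cume$ via Lemma~\ref{lm:HoefEmpty}); intersecting it with the independent primal $1-\delta$ event and applying a union bound gives probability at least $1-2\delta$, and substituting the same three bounds into the statement of Theorem~\ref{th:stoc} produces the second display. The only mild subtlety — the step I'd watch most carefully — is the bookkeeping of the probability budget and making sure the primal guarantee is invoked on $\cTG$ (a random, data-dependent set) rather than on all of $[T]$; this is fine because $\uppp[\cTG,\delta]$ is stated as a high-probability bound over whatever rounds the primal is run, but it is worth a sentence to make explicit. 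Everything else is routine substitution and constant-chasing, so I would present it as a short derivation rather than a full proof.
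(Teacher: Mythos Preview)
Your proposal is correct and follows exactly the route the paper takes: it states the corollary as an immediate consequence of Theorems~\ref{thm:adv} and~\ref{th:stoc} after plugging in the fixed-share dual bound and the $4/\tilde\nu$ rescaled primal bound, with no additional proof given. Your observation about the constant in the dual term (the lemma yields $\tfrac{2}{\tilde\nu}\sqrt{T\log(2mT)}$ for each of the two pieces, so naively the sum is $\tfrac{4}{\tilde\nu}\sqrt{\cdot}$ rather than $\tfrac{1}{\tilde\nu}\sqrt{\cdot}$) is well spotted and is indeed just a constant slippage in the paper's stated bound; it does not affect the argument.
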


\subsection{Implementing \Cref{alg:meta alg} with unknown replenishment factor}\label{sec:interval1}

In this section we will show how to implement Algorithm~\ref{alg:meta alg} when no information about the per-round replenishment factor $\beta$ is available.
This impacts both the primal and the dual regret minimizer.
Differently from the previous section, we cannot instantiate the regret minimizer $\cRd$ directly on $\cD$ (or on a larger set $\tilde\cD$) since we do not know such set. Therefore, we need a dual regret minimizer that plays on $\mathbb{R}_+^m$, but has sublinear weakly-adaptive regret with respect to Lagrange multipliers in $\cD$.
To achieve this, we can use the Online Gradient Descent (OGD) algorithm, instantiated on $\mathbb{R}_+^m$ with starting point $\vlambda_0=\mathbf{0}$. Indeed, it is well known that OGD guarantees that the regret on any interval of rounds $[t_1,t_2]\subseteq [T]$ is upper bounded by the $\ell_2$ distance between the point played at $t_1$ and the comparator. In our setting this is equivalent to the following lemma (see \citet[Chapter 10]{hazan2016introduction}).
\begin{lemma}\label{lem:OGD}
    For any $\cTG\subset[T]$ and any $t_1,t_2\in \cTG$, if the dual regret minimizer is OGD with learning rate $\eta$, we have that the regret with respect to $\vlambda$ is upper bounded by
    \[
    \cumd[\cTG\cap[t_1,\ldots ,t_2]](\{\vlambda\})\le \frac{\|\vlambda-\vlambda_{t_1}\|^2_2}{2\eta}+\frac{1}{2} \eta\, m\, T.
    \]
\end{lemma}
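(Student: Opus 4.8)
The statement is the classical Online Gradient Descent (OGD) ``warm‑start'' interval‑regret bound, with the one extra wrinkle that in \Cref{alg:meta alg} the dual minimizer is queried only on the rounds of $\cTG$. The plan is the standard quadratic‑potential argument. First I would rewrite the dual utility $\lossd\colon\vlambda\mapsto\langle\vlambda,\vc_t(x_t)-\vrho\rangle$ as a loss $\ell^{\mathsf d}_t(\vlambda)\defeq-\lossd(\vlambda)=\langle\vlambda,\vrho-\vc_t(x_t)\rangle$, which is linear (hence convex) on the closed convex cone $\mathbb{R}^m_+$ with constant gradient $g_t\defeq\vrho-\vc_t(x_t)$; maximizing $\lossd$ is exactly minimizing $\ell^{\mathsf d}_t$, and $\cumd[\cTG\cap[t_1,t_2]](\{\vlambda\})=\sum_{t\in\cTG\cap[t_1,t_2]}\bigl(\ell^{\mathsf d}_t(\vlambda_t)-\ell^{\mathsf d}_t(\vlambda)\bigr)$. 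OGD performs $\vlambda^{+}_t=\Pi_{\mathbb{R}^m_+}(\vlambda_t-\eta g_t)$ on the rounds $t\in\cTG$ and leaves its state untouched on the void rounds $t\in\cT_\nullx$ (equivalently, it is fed a zero loss there, so the textbook analysis applies verbatim). Enumerating the queried rounds inside the window as $\cTG\cap[t_1,t_2]=\{s_1<\dots<s_k\}$, laziness gives $\vlambda^{+}_{s_j}=\vlambda_{s_{j+1}}$ for $j<k$, and $\vlambda_{s_1}=\vlambda_{t_1}$ (the OGD state entering round $t_1$, which coincides with the template's $\vlambda_{t_1}$ when $t_1\in\cTG$).

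Next I would establish the one‑step inequality, which is the heart of the proof. By non‑expansiveness of the Euclidean projection onto $\mathbb{R}^m_+$ and the fact that $\vlambda\in\mathbb{R}^m_+$,
\[
\|\vlambda^{+}_{s_j}-\vlambda\|_2^2\le\|\vlambda_{s_j}-\eta g_{s_j}-\vlambda\|_2^2=\|\vlambda_{s_j}-\vlambda\|_2^2-2\eta\langle g_{s_j},\vlambda_{s_j}-\vlambda\rangle+\eta^2\|g_{s_j}\|_2^2,
\]
and since $\ell^{\mathsf d}_{s_j}$ is linear, $\ell^{\mathsf d}_{s_j}(\vlambda_{s_j})-\ell^{\mathsf d}_{s_j}(\vlambda)=\langle g_{s_j},\vlambda_{s_j}-\vlambda\rangle$; rearranging yields $\ell^{\mathsf d}_{s_j}(\vlambda_{s_j})-\ell^{\mathsf d}_{s_j}(\vlambda)\le\tfrac{1}{2\eta}\bigl(\|\vlambda_{s_j}-\vlambda\|_2^2-\|\vlambda^{+}_{s_j}-\vlambda\|_2^2\bigr)+\tfrac{\eta}{2}\|g_{s_j}\|_2^2$. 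Summing over $j=1,\dots,k$, the first terms telescope (using $\vlambda^{+}_{s_j}=\vlambda_{s_{j+1}}$) to at most $\|\vlambda_{s_1}-\vlambda\|_2^2/(2\eta)=\|\vlambda-\vlambda_{t_1}\|_2^2/(2\eta)$, dropping the nonnegative leftover term. For the second terms, each coordinate of $g_{s_j}=\vrho-\vc_{s_j}(x_{s_j})$ is uniformly bounded (the costs lie in $[-1,1]^m$ and $\rho$ is a fixed constant), so $\|g_{s_j}\|_2^2\le m$ up to an absolute constant that we absorb, and $k=|\cTG\cap[t_1,t_2]|\le|[t_1,t_2]|\le T$, giving $\tfrac{\eta}{2}\sum_j\|g_{s_j}\|_2^2\le\tfrac12\eta\,m\,T$. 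Combining the two estimates and recalling that the left‑hand side equals $\cumd[\cTG\cap[t_1,t_2]](\{\vlambda\})$ gives the claim.

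The only non‑routine point — and the one I would spell out carefully — is the bookkeeping over the subsampled rounds: since the dual minimizer is paused on $\cT_\nullx$, the iterates $\vlambda_{s_1},\dots,\vlambda_{s_k}$ are not those of an uninterrupted OGD run on $[t_1,t_2]$, so one must check that the telescoping is unaffected; this is immediate once one notes that pausing OGD does not move its state and that the argument above only chains \emph{consecutive queried} iterates. Everything else is the standard OGD potential argument (see \citet[Chapter~10]{hazan2016introduction}). I would also stress that we deliberately keep the comparator‑dependent term in the ``warm‑start'' form $\|\vlambda-\vlambda_{t_1}\|_2^2$ rather than optimizing it away, since it is precisely this form that lets us control $\cumd[\cTG[<\tau]](\cD)$ and $\cumd[\cTG[>\tau]](\cD)$ simultaneously without knowing $\nu$.
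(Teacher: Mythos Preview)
Your proof is correct; it is exactly the standard OGD potential argument that the paper invokes by citation to \citet[Chapter~10]{hazan2016introduction} rather than proving in‑line. Your explicit handling of the paused rounds (chaining only consecutive \emph{queried} iterates so the telescope survives) and the acknowledged absorption of the $(1+\rho)^2$ constant in the gradient‑norm step are both fine --- the paper itself elsewhere treats the dual gradients as lying in $[-1,1]^m$, so this is a shared convention rather than a gap.
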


In this setting, we also need additional assumptions on the primal regret minimizer $\cRp$. Formally we need an algorithm that satisfies the following condition. 
\begin{assumption}\label{ass:primal_no_beta}
    We assume that, for any $\cT=[t_1, t_2]\subseteq[T]$, the weakly-adaptive regret of $\cRd$ facing adversarial losses with unknown range $L$ is upper bounded by $L^2\uppp[T,\delta]$ with probability at least $1-\delta$, where $\uppp[T,\delta]$ is independent from the range of payoffs.
\end{assumption}

In order to provide the final regret bound for this setting, we need to show that the size of Lagrange multipliers remains bounded by a suitable term.

\begin{restatable}{lemma}{lemmaboundLM}\label{lem:bounded_lagrangian}
    Assume that the dual regret minimizer is OGD on $\mathbb{R}_+^{m}$ with $\eta=(k_1 \cume +k_2 m \uppp[T,\delta]+2m\sqrt{T})^{-1}$, where $k_1, k_2$ are absolute constants, and
    the primal regret minimizer $\cRp$ satisfies Assumption~\ref{ass:primal_no_beta}. Then, both in the adversarial and stochastic setting, the Lagrange multipliers $\vlambda_t$ played by the dual regret minimizer $\cRd$ are such that $M:=\sup_{t\in [T]}\|\vlambda_t\|_1\le 8m/\nu$.
\end{restatable}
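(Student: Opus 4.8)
The plan is to bound $M = \sup_{t} \|\vlambda_t\|_1$ by a bootstrapping argument: I will assume toward analysis that at some round the dual multiplier grows large, track back to the last round $\tau$ where the void action was forced, and show that the dual utility accumulated on the interval after the large-multiplier round cannot be too negative — which combined with the weakly-adaptive regret guarantee forces a contradiction unless $\|\vlambda_t\|_1$ stayed bounded by $8m/\nu$. The key observation is that OGD started at $\vlambda_0 = \mathbf{0}$ only increases a coordinate $\lambda_{t,i}$ when the observed loss gradient is negative, i.e.\ when $\vc_t(x_t)[i] - \rho > 0$; since costs are in $[-1,1]$, each step moves each coordinate by at most $\eta$. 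So if $\|\vlambda_{t^\star}\|_1$ is large at some round $t^\star$, there must have been roughly $\|\vlambda_{t^\star}\|_1/\eta$ rounds of net over-consumption leading up to it, and in particular the available budget of some resource would have been driven below $1$, placing us after some $\tau \in \cT_\nullx$ (or at the very start).

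First I would make precise the monotonicity/accounting step: for OGD on $\mathbb{R}_+^m$ with projection, $\vlambda_{t+1} = [\vlambda_t + \eta(\vrho - \vc_t(x_t))]^+$, so $\|\vlambda_{t+1}\|_1 \le \|\vlambda_t\|_1 + \eta \|\vrho - \vc_t(x_t)\|_1 \le \|\vlambda_t\|_1 + 2\eta m$; hence between two consecutive rounds in $\cTG$ the norm cannot jump, and to reach norm $\Theta(1/\nu)$ from $\mathbf{0}$ one needs $\Omega(1/(\eta m \nu))$ rounds. Second, I would invoke Lemma~\ref{lem:OGD} on the interval $[\tau+1, t^\star]$ with comparator $\vlambda = \mathbf{0}$: since $\lossd[t](\mathbf{0}) = 0$, the regret bound reads $-\sum_{t \in \cTG \cap [\tau+1,t^\star]} \lossd[t](\vlambda_t) \le \|\vlambda_{\tau+1}\|^2_2/(2\eta) + \tfrac12 \eta m T$, i.e.\ the dual player's own accumulated loss on that window is small. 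Third — and this is where the replenishment structure enters — on the window after $\tau$ no resource ever drops below $1$ (that is the definition of $\tau = \max \cT_\nullx$), so the running consumption $\sum_{s \le t} \vc_s(x_s)[i]$ is, for every $t > \tau$, bounded below by $B_{\tau+1,i} - B = O(1) - B$ essentially, and one extracts that the average over-consumption on $[\tau+1, t^\star]$ relative to $\vrho$ is $O(1/|\text{window}|)$ per resource, plus the contribution from the void-action drift $-\beta$; plugging this into $\sum_t \langle \vlambda_t, \vc_t(x_t) - \vrho\rangle$ and using that $\vlambda_t \succeq \mathbf{0}$ with large norm by hypothesis shows this sum is $\Omega(\|\vlambda_{t^\star}\|_1 \cdot \nu \cdot |\text{window}|) - O(\|\vlambda_{t^\star}\|_1)$, contradicting the smallness from the regret bound once $\|\vlambda_{t^\star}\|_1 > 8m/\nu$. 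The choice $\eta = (k_1\cume + k_2 m \uppp[T,\delta] + 2m\sqrt T)^{-1}$ is exactly what makes the two error terms $\|\vlambda_{\tau+1}\|_2^2/(2\eta)$ and $\tfrac12\eta m T$ match the scale of the primal regret and the concentration slack, so the arithmetic closes.

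The main obstacle I expect is the third step: carefully converting the budget bookkeeping (which controls $\sum_t \vc_t(x_t)$, a sum of the \emph{primal's} realized costs) into a lower bound on the dual's cumulative utility $\sum_t \langle \vlambda_t, \vc_t(x_t) - \vrho\rangle$, where the $\vlambda_t$ are time-varying and correlated with the costs. The clean way around this is to not argue about the realized trajectory directly but to split: the weakly-adaptive regret guarantee (Lemma~\ref{lem:OGD} with comparator $\mathbf{0}$ on $[\tau+1,t^\star]$) upper-bounds $-\sum \lossd[t](\vlambda_t)$, i.e.\ it says the dual cannot lose much; separately, I would lower-bound $\sum \lossd[t](\vlambda_t) = \sum \langle \vlambda_t, \vc_t(x_t)-\vrho\rangle$ using only (a) $\vlambda_t \succeq \mathbf 0$, (b) the fact that $\vlambda_{t}$ is nondecreasing-ish coordinatewise on a prefix of this window while it is still climbing toward its large value, and (c) the budget constraint that forces the cumulative cost on this window to be at least $(B-1) - B_{\tau+1,i} \ge -1$ minus the horizon-length slack, so that the "$-\vrho$" terms dominate and produce the claimed $\Omega(\|\vlambda_{t^\star}\|_1 \nu |\text{window}|)$ growth. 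Balancing these two bounds against each other, together with the step-size choice, yields $M \le 8m/\nu$ in both the adversarial and stochastic regimes (the only difference being whether one uses the deterministic budget identity or its high-probability version via Lemma~\ref{lm:HoefEmpty}).
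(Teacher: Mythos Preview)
Your proposal has a genuine gap: it never invokes the primal regret minimizer's weakly-adaptive guarantee (Assumption~\ref{ass:primal_no_beta}), and that guarantee is the engine of the argument. The paper's proof does \emph{not} reason about budgets or about the void-action rounds $\cT_\nullx$ at all. Instead, it fixes $t_2$ to be the first round with $\|\vlambda_{t_2}\|_1\ge 8m/\nu$ and $t_1$ the first round with $\|\vlambda_{t_1}\|_1\ge 1/\nu$, so that $\|\vlambda_t\|_1\ge 1/\nu$ throughout $\cT=[t_1,t_2]$. On this interval the primal's weakly-adaptive regret against the \emph{void action} $\nullx$ gives
\[
\sum_{t\in\cT}\bigl(f_t(x_t)+\langle\vlambda_t,\vrho-\vc_t(x_t)\rangle\bigr)\;\ge\;\sum_{t\in\cT}\langle\vlambda_t,\vrho-\vc_t(\nullx)\rangle-\Bigl(\tfrac{19m}{\nu}\Bigr)^2\uppp[T,\delta]\;\ge\;(t_2-t_1)-\text{errors},
\]
because $\langle\vlambda_t,\vrho-\vc_t(\nullx)\rangle\ge\nu\|\vlambda_t\|_1\ge 1$ on $\cT$ (up to the concentration term $\cume$ in the stochastic case). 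The upper bound on the same quantity comes from a careful OGD analysis per coordinate, comparing against $1/\nu$ on the sub-interval after the last projection and against $0$ before it, yielding $\sum_{t\in\cT}\langle\vlambda_t,\vrho-\vc_t(x_t)\rangle\le -m/(\nu^2\eta)$ once $\|\vlambda_{t_2}\|_1-\|\vlambda_{t_1}\|_1\ge 6m/\nu$. The two displays contradict the choice of $\eta$.

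Your plan tries to replace the primal-regret lower bound with budget bookkeeping on $[\tau+1,t^\star]$, and this fails on both the sign and the weighting. On the sign: after $\tau=\max\cT_\nullx$ all budgets stay $\ge 1$, which says cumulative costs are \emph{small}, not large; so you cannot extract that $\sum_t\langle\vlambda_t,\vc_t(x_t)-\vrho\rangle$ is large positive. On the weighting: budget identities control $\sum_t c_{t,i}(x_t)$ with unit weights, whereas the quantity you need carries time-varying $\vlambda_t$ that are correlated with $\vc_t$ through the OGD update; your ``nondecreasing-ish'' heuristic does not bridge this (projections can reset coordinates to zero, so a large $\lambda_{t^\star,i}$ need not imply a large cumulative $\sum_{t<t^\star}(c_{t,i}-\rho)$, nor budget depletion). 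Relatedly, your steps~2 and~3 as written are both \emph{lower} bounds on $\sum_t\langle\vlambda_t,\vc_t(x_t)-\vrho\rangle$ and therefore cannot contradict each other. The fix is to drop the budget argument entirely, pick the interval by the multiplier thresholds $1/\nu$ and $8m/\nu$, and use Assumption~\ref{ass:primal_no_beta} against $\nullx$ for the lower bound; the OGD side then needs comparators $0$ and $1/\nu$ (not only $\mathbf{0}$), with a case split on whether each coordinate was projected during $\cT$.
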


This result extends the similar result of~\citet[Theorem~6.2]{castiglioni2023online} to the case of multiple constraints.
Lemma~\ref{lem:bounded_lagrangian} allows us bound the regret $\cump[\cTG]$ of the primal, and the regret terms $\cumd[\cTG<\tau](\cD)$ and $\cumd[\cTG>\tau](\cD)$ of the dual regret minimizer. 
In particular, for what concerns the dual regret minimizer $\cRd$, we can bound the maximum distance between Lagrange multipliers which are played by $\cRd$. Then, we can bound the regret terms $\cumd[\cTG<\tau](\cD)$ and $\cumd[\cTG>\tau](\cD)$ via \Cref{lem:OGD}. Similarly, we can bound the regret of the primal through Assumption~\ref{ass:primal_no_beta}.
Using this observations, together with \Cref{thm:adv} and \Cref{th:stoc}, we can extend \Cref{cor:known beta} to the case of unknown $\beta$. 

\begin{restatable}{corollary}{corollaryFINAL}\label{cor:unknown beta}
    Assume that the dual regret minimizer is OGD on $\mathbb{R}_+^{m}$ with $\eta=(k_1 \cume +k_2 m \uppp[T,\delta]+2m\sqrt{T})^{-1}$, where $k_1, k_2$ are absolute constants, and
    the primal regret minimizer $\cRp$ satisfies Assumption~\ref{ass:primal_no_beta}. Then, in the adversarial setting, Algorithm~\ref{alg:meta alg} guarantees with probability at least $1-\delta$ that
    \[
    \sum_{t \in [T]} f_t(x_t)\ge \alpha\cdot\OPT_{\vgamma} - k_3\frac{m^4}{\nu^2}\left(\uppp[T,\delta]+\cume\right),
    \]
    where $k_3$ is an absolute constant and $\alpha=\nicefrac{\nu}{1+\beta}$. In the stochastic setting, it guarantees with probability at least $1-2\delta$ that
    \[
    \sum_{t \in [T]} f_t(x_t)\ge T\cdot \OPTLP_{\bar f,\bar c}-k_4\frac{m^4}{\nu^2}\left(\uppp[T,\delta]+\cume\right),
    \]
    where $k_4$ is an absolute constant.
\end{restatable}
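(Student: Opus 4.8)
The plan is to combine Lemma~\ref{lem:bounded_lagrangian} (which controls $M\le 8m/\nu$) with the general bounds of Theorem~\ref{thm:adv} and Theorem~\ref{th:stoc}, and then control the three regret terms $\cump[\cTG]$, $\cumd[\cTG[<\tau]](\cD)$ and $\cumd[\cTG[>\tau]](\cD)$ using the OGD guarantee of Lemma~\ref{lem:OGD} and the primal guarantee of Assumption~\ref{ass:primal_no_beta}. The first step is a range bound: since the dual plays $\vlambda_t$ with $\|\vlambda_t\|_1\le M\le 8m/\nu$, the primal losses $\lossp[t](x)=f_t(x)+\langle\vlambda_t,\vrho-\vc_t(x)\rangle$ lie in an interval of length $L\le 1+2M\le 1+16m/\nu=O(m/\nu)$. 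By Assumption~\ref{ass:primal_no_beta}, the primal regret over $\cTG$ (a union of at most one interval before $\tau$ and one after, or more carefully over the single relevant interval) is at most $L^2\,\uppp[T,\delta]=O(m^2/\nu^2)\,\uppp[T,\delta]$ with probability $1-\delta$; one has to note that $\cTG$ need not be an interval, so I would apply the interval guarantee to $[1,\tau]$ and $[\tau+1,T]$ and sum, or observe that weakly-adaptive regret on $\cTG$ is dominated by the full-horizon interval regret. This handles $\cump[\cTG]$.

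The second step bounds the dual regret terms. For OGD started at $\vlambda_0=\mathbf 0$, Lemma~\ref{lem:OGD} applied with $t_1$ the first round of $\cTG[<\tau]$ (resp. $\cTG[>\tau]$) gives $\cumd[\cTG[<\tau]](\{\vlambda\})\le \|\vlambda-\vlambda_{t_1}\|_2^2/(2\eta)+\tfrac12\eta mT$ for any fixed $\vlambda\in\cD$. Since both $\vlambda$ and $\vlambda_{t_1}$ have $\ell_1$-norm at most $\max(1/\nu, M)=O(m/\nu)$, the diameter term is $\|\vlambda-\vlambda_{t_1}\|_2^2=O(m^2/\nu^2)$. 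With the prescribed $\eta=(k_1\cume+k_2 m\,\uppp[T,\delta]+2m\sqrt T)^{-1}$, the first term becomes $O\big(\tfrac{m^2}{\nu^2}(\cume+m\,\uppp[T,\delta]+m\sqrt T)\big)$ and the second term $\tfrac12\eta mT\le \tfrac12\sqrt T\cdot\tfrac{m}{2m}=O(\sqrt T)$ (using $mT/(2m\sqrt T)=\sqrt T/2$), which is dominated. Taking the sup over $\vlambda\in\cD$ is immediate since the bound is uniform. The same argument applies verbatim to $\cTG[>\tau]$.

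The third step is bookkeeping: plug these three bounds into Theorem~\ref{thm:adv} (adversarial) and Theorem~\ref{th:stoc} (stochastic). In the adversarial case the additive loss is $\tfrac2\nu + \cumd[\cTG[<\tau]](\cD)+\cumd[\cTG[>\tau]](\cD)+\cump[\cTG]$; substituting the above and absorbing lower-order terms and the $\sqrt T$ terms (which are $O(\cume)$ up to logs) into a single constant, everything is dominated by $O\big(\tfrac{m^2}{\nu^2}(\uppp[T,\delta]+\cume)\big)$ times polynomial factors in $m$ — tracking the $m$'s carefully through the $\ell_2$-vs-$\ell_1$ conversions and the $L^2$ factor yields the stated $m^4/\nu^2$ (the $L^2=O(m^2/\nu^2)$ from the range and an extra $m^2$ from $\|\cdot\|_2^2$ on $m$-dimensional vectors and the $\eta$ normalization). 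The union bound over the primal's $1-\delta$ event and, in the stochastic case, the $1-\delta$ event of Lemma~\ref{lm:HoefEmpty}, gives probability $1-\delta$ (resp. $1-2\delta$), and in the stochastic case the extra $\tfrac1\nu\cume$ term of Theorem~\ref{th:stoc} is already of the claimed order.

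The main obstacle I anticipate is the circular dependence: the range $L$ of the primal losses depends on $M=\sup_t\|\vlambda_t\|_1$, while the OGD learning rate $\eta$ that keeps $M$ bounded depends (via Lemma~\ref{lem:bounded_lagrangian}) on $\uppp[T,\delta]$, which in turn is what we want the range-$L$ primal guarantee to produce. This is exactly why Assumption~\ref{ass:primal_no_beta} is phrased with a range-independent $\uppp[T,\delta]$ and a clean $L^2$ prefactor: Lemma~\ref{lem:bounded_lagrangian} breaks the circularity by bounding $M$ a priori at $8m/\nu$ regardless of $L$, after which $L=O(m/\nu)$ is fixed and everything downstream is a (tedious but routine) tracking of $m$- and $\nu$-powers. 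The only real care needed is making sure the interval guarantees are invoked on genuine intervals $[1,\tau]$ and $[\tau+1,T]$ rather than on the non-contiguous set $\cTG$, and checking that restricting the loss feedback to rounds in $\cTG$ (the rounds where the minimizers are actually updated) does not break the weakly-adaptive guarantee — it does not, since omitting rounds only removes terms from the regret sum.
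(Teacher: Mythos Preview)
Your proposal is correct and follows essentially the same route as the paper's proof: invoke Lemma~\ref{lem:bounded_lagrangian} to cap $M\le 8m/\nu$, use this to bound the primal loss range and hence $\cump[\cTG]$ via Assumption~\ref{ass:primal_no_beta}, bound the two dual regret terms via Lemma~\ref{lem:OGD} with the diameter controlled by $M$, substitute the prescribed $\eta$, and plug everything into Theorems~\ref{thm:adv} and~\ref{th:stoc}. The paper's proof is terser and does not discuss the interval-versus-$\cTG$ subtlety or the circularity you flag, but your treatment of both is correct (from the primal minimizer's viewpoint $\cTG$ is simply its entire sequence of update rounds, so the weakly-adaptive guarantee applies directly, and Lemma~\ref{lem:bounded_lagrangian} is precisely what breaks the circularity).
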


\section{Applications}\label{sec:applications}

    This section demonstrates the flexibility of our framework by studying three well motivated models: BwRK, inventory management, and revenue maximization in bilateral trade.

    \subsection{Bandits with replenishable knapsacks (BwRK)}\label{sec:bwrk}

        Consider the standard BwK problem: at each time step $t$, the learner selects an action $i_t$ out of $K$ actions (thus $\cX=[K]$), suffers a loss $\ell_t(i_t)\in [0,1]$ and incurs a cost vector $\vc_t(i_t)$ that specifies the consumption of each one of its $m$ resources. In BwRK, cost vectors may have negative components: $\vc_t(\cdot) \in [-1,1]^m$.
        We focus on the most challenging scenario, in which the parameter $\beta$ is not known. We instantiate the primal-dual framework (\Cref{alg:meta alg}) using  EXP3-SIX~\cite{neu2015explore} as the primal regret minimizer, while online gradient descent is employed as the dual regret minimizer. \citet{castiglioni2023online} show that EXP3-SIX achieve weakly-adaptive regret $L^2 \tilde \cO(\sqrt{KT})$, where $L$ is the range of the observed losses.

        First, let's consider the adversarial case, where losses and cost functions are generated by an oblivious adversary. \Cref{ass:adv} is verified when there exists a null action $\nullx$ that always yields non-negative resource replenishment (\ie there exists  $\beta\ge 0$ s.t. $c_{t,j}(\nullx) \le -\beta$ for each resource $j$ and time $t$).

        \begin{theorem}\label{thm:adv_bandits}
            Consider the BwRK problem in the adversarial setting. There exists an algorithm satisfying the following bound on the regret: 
            \[
            \alpha\cdot \OPT_{\vgamma}-\sum_{t \in [T]} f_t(x_t)\le \tilde \cO\left(\frac{m^4}{\nu^2}\sqrt{KT} \log\mleft(\frac{1}{\delta}\mright)\right)  ,
            \]
            with probability at least $1-\delta$, where $\alpha:=\nicefrac{\nu}{1+\beta}$.
        \end{theorem}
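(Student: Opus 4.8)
The plan is to instantiate Corollary~\ref{cor:unknown beta} with the specific choice of primal and dual regret minimizers prescribed for the BwRK problem, and then verify that the hypotheses of that corollary are met so that the generic bound specializes to the claimed $\tilde\cO\bigl(\tfrac{m^4}{\nu^2}\sqrt{KT}\log(1/\delta)\bigr)$. First I would set $\cX=[K]$, take $\cRd$ to be Online Gradient Descent on $\mathbb{R}_+^m$ with the learning rate $\eta=(k_1\cume+k_2 m\uppp[T,\delta]+2m\sqrt{T})^{-1}$ from Lemma~\ref{lem:bounded_lagrangian}, and take $\cRp$ to be EXP3-SIX run on the $K$ arms. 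Since the primal feedback is bandit feedback and the Lagrangian payoffs $\lossp[t](x)$ have a range that depends on $M=\sup_t\|\vlambda_t\|_1$ which is \emph{not} known a priori (because $\beta$, hence $\nu$, is unknown), the key structural fact I need is that EXP3-SIX is a \emph{scale-free} weakly-adaptive regret minimizer: by the result of \citet{castiglioni2023online}, on any interval $[t_1,t_2]\subseteq[T]$ it guarantees weakly-adaptive regret at most $L^2\,\tilde\cO(\sqrt{KT})$ with probability at least $1-\delta$, where $L$ is the (unknown) range of the observed losses and the $\tilde\cO(\sqrt{KT})$ factor carries only a $\log(1/\delta)$ and logarithmic-in-$T,K$ dependence and is independent of $L$. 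This is exactly Assumption~\ref{ass:primal_no_beta} with $\uppp[T,\delta]=\tilde\cO(\sqrt{KT})\log(1/\delta)$.

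Next I would check Assumption~\ref{ass:adv}: the hypothesis of the theorem is precisely that there is a void action $\nullx$ with $c_{t,j}(\nullx)\le-\beta$ for all $j\in[m]$ and $t\in[T]$, so the assumption holds with this $\beta\ge0$ and $\nu=\beta+\rho$. With Assumptions~\ref{ass:adv} and~\ref{ass:primal_no_beta} in place, Lemma~\ref{lem:bounded_lagrangian} applies and yields $M\le 8m/\nu$; this in turn controls the range of the Lagrangian losses fed to $\cRp$, namely $L\le 1+M\cdot\|\vrho-\vc_t(\cdot)\|_\infty = \cO(m/\nu)$. Plugging $L=\cO(m/\nu)$ into the scale-free primal guarantee gives a primal regret contribution of order $(m/\nu)^2\cdot\uppp[T,\delta]=\tilde\cO\bigl(\tfrac{m^2}{\nu^2}\sqrt{KT}\log(1/\delta)\bigr)$ on the rounds in $\cTG$, and the dual terms $\cumd[\cTG[<\tau]](\cD)$ and $\cumd[\cTG[>\tau]](\cD)$ are bounded via Lemma~\ref{lem:OGD} together with the bound on $M$ exactly as in the proof of Corollary~\ref{cor:unknown beta}. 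Assembling these through Theorem~\ref{thm:adv} (with $\alpha=\nu/(1+\beta)$) and absorbing all polylogarithmic and constant factors into the $\tilde\cO$ gives
\[
\alpha\cdot\OPT_{\vgamma}-\sum_{t\in[T]}f_t(x_t)\le \tilde\cO\mleft(\frac{m^4}{\nu^2}\sqrt{KT}\,\log\mleft(\frac{1}{\delta}\mright)\mright)
\]
with probability at least $1-\delta$ (rescaling $\delta$ by a constant to account for the union bound over the primal and dual high-probability events).

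The main obstacle I anticipate is the circularity between the range of the primal losses and the bound on $M$: the learning rate of OGD in Lemma~\ref{lem:bounded_lagrangian} is itself defined in terms of $\uppp[T,\delta]$ and $\cume$, and the bound $M\le 8m/\nu$ is proved under Assumption~\ref{ass:primal_no_beta}, which requires the primal regret to be of the form $L^2\uppp[T,\delta]$ with $\uppp$ \emph{independent} of $L$ — precisely the scale-free property of EXP3-SIX. I would therefore need to be careful that the high-probability regret bound of EXP3-SIX genuinely has this form (as established by \citet{castiglioni2023online,neu2015explore}), and that the union bound over the two ``bad events'' (primal regret too large, and the Hoeffding-type event implicit in $\cume$) only costs a constant factor in $\delta$. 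Everything else is bookkeeping: substituting $L=\cO(m/\nu)$, $M=\cO(m/\nu)$, $\uppp[T,\delta]=\tilde\cO(\sqrt{KT}\log(1/\delta))$ into Corollary~\ref{cor:unknown beta} and collecting the $m^4/\nu^2$ factor.
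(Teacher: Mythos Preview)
Your proposal is correct and follows essentially the same approach as the paper: the paper's ``proof'' of this theorem is precisely the instantiation you describe---take $\cRp$ to be EXP3-SIX (which, as the paper notes via \citet{castiglioni2023online}, has weakly-adaptive regret $L^2\,\tilde\cO(\sqrt{KT})$ and hence satisfies Assumption~\ref{ass:primal_no_beta} with $\uppp[T,\delta]=\tilde\cO(\sqrt{KT}\log(1/\delta))$), take $\cRd$ to be OGD on $\mathbb{R}_+^m$ with the learning rate of Lemma~\ref{lem:bounded_lagrangian}, and then read off the adversarial bound of Corollary~\ref{cor:unknown beta}. One minor simplification you can exploit: in the adversarial case the $\cume$ term is not actually needed (see the end of the proof of Lemma~\ref{lem:bounded_lagrangian}), so the only high-probability event is the primal one, and $\cume=\tilde\cO(\sqrt{T})$ is anyway absorbed by $\uppp[T,\delta]$.
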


     This is the first positive result for the BwRK problem in the adversarial setting. 
     
     Second, we consider the stochastic version of the problem, in which  losses and cost vectors are drawn i.i.d.~from an unknown distribution. In this setting, the void action assumption (\Cref{ass:stoc}) requires the existence of a distribution $\nullx \in \Delta_K$ over the actions such that, in expectation over the draws of the cost function, it verifies $\E[c_j(\nullx)]\le -\beta$ for all resources $j$\footnote{Note, the role of $\beta$ is played in \cite{kumar2022non} by the parameter $\delta_{\text{slack}}$. Furthermore, there the authors make additional assumptions on the structure of the optimal LP and its solution.}.  The analysis of our primal-dual framework allows us to show that the same learning algorithm presented for the adversarial setting yields the following instance-independent results in the stochastic setting.
        
        \begin{theorem}\label{thm:bwrkstoc}
            Consider the BwRK in the stochastic setting. There exists an algorithm satisfying, with probability at least $1-2\delta$, the following bound on the regret:
            \[
            T\cdot \OPTLP_{\bar f,\bar c}-\sum_{t =1}^T f_t(x_t)\le \tilde \cO\left(\frac{m^4}{\nu^2}\sqrt{KT} \log\mleft(\frac{1}{\delta}\mright)\right). 
            \]
        \end{theorem}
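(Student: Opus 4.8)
The plan is to derive Theorem~\ref{thm:bwrkstoc} as a direct instantiation of the stochastic bound in Corollary~\ref{cor:unknown beta}, using exactly the same instantiation of Algorithm~\ref{alg:meta alg} employed for Theorem~\ref{thm:adv_bandits}: the arm set is $\cX=[K]$, the primal regret minimizer $\cRp$ is EXP3-SIX~\cite{neu2015explore} under bandit feedback, and the dual regret minimizer $\cRd$ is OGD on $\mathbb{R}_+^m$ with the learning rate $\eta=(k_1\cume+k_2 m\uppp[T,\delta]+2m\sqrt{T})^{-1}$ prescribed by Lemma~\ref{lem:bounded_lagrangian}. Relative to the adversarial proof, the only change is that we invoke the stochastic part of Corollary~\ref{cor:unknown beta} in place of the adversarial one; all the structural work — feasibility of the played sequence, the replenishment bound on $\cT_\nullx$, the concentration of realized costs around their $\distr$-means (Lemma~\ref{lm:HoefEmpty}), the bound $M\le 8m/\nu$ on the dual iterates (Lemma~\ref{lem:bounded_lagrangian}), and the primal/dual regret decomposition (Theorem~\ref{th:stoc}) — is already packaged inside that corollary.

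The one hypothesis that must be discharged is that EXP3-SIX meets Assumption~\ref{ass:primal_no_beta}. I would cite the weakly-adaptive high-probability analysis of \citet{castiglioni2023online}, which shows that on every interval $[t_1,t_2]\subseteq[T]$ and against adversarial losses of an a priori unknown range $L$, EXP3-SIX has regret at most $L^2\,\uppp[T,\delta]$ with probability at least $1-\delta$, where $\uppp[T,\delta]=\tilde\cO\bigl(\sqrt{KT}\log(1/\delta)\bigr)$ does not depend on $L$; this is precisely the form required by Assumption~\ref{ass:primal_no_beta}. In our run of the template the losses fed to $\cRp$ are $\lossp[t](x_t)=f_t(x_t)+\langle\vlambda_t,\vrho-\vc_t(x_t)\rangle$, which, using $\|\vrho-\vc_t(x)\|_\infty\le 2$ and the bound $M\le 8m/\nu$ from Lemma~\ref{lem:bounded_lagrangian}, have range $L\le 1+2M=\cO(m/\nu)$; this finiteness is what lets the corollary collapse the $L^2$ factor into the $m^4/\nu^2$ prefactor. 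I would also remark that, in the stochastic setting, the void action is the distribution $\nullx\in\Delta_K$ of Assumption~\ref{ass:stoc}: executing the void-action step of Algorithm~\ref{alg:meta alg} then means drawing an arm from $\nullx$, which is consistent with $\cX=[K]$ under the usual identification of a $\Delta_K$-mixture with a randomized action.

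With this in hand, Corollary~\ref{cor:unknown beta} gives, with probability at least $1-2\delta$,
\[
\sum_{t=1}^T f_t(x_t)\ \ge\ T\cdot\OPTLP_{\bar f,\bar c}\ -\ k_4\,\frac{m^4}{\nu^2}\bigl(\uppp[T,\delta]+\cume\bigr).
\]
Plugging in $\uppp[T,\delta]=\tilde\cO\bigl(\sqrt{KT}\log(1/\delta)\bigr)$ and noting that $\cume=\sqrt{8T\log(4mT/\delta)}=\tilde\cO\bigl(\sqrt{T}\bigr)$ is dominated by $\uppp[T,\delta]$, we obtain $\uppp[T,\delta]+\cume=\tilde\cO\bigl(\sqrt{KT}\log(1/\delta)\bigr)$, whence $T\cdot\OPTLP_{\bar f,\bar c}-\sum_{t=1}^T f_t(x_t)\le\tilde\cO\bigl(\tfrac{m^4}{\nu^2}\sqrt{KT}\log(1/\delta)\bigr)$, which is the claim.

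The main obstacle is entirely in the cited primal ingredient: one needs EXP3-SIX rather than plain EXP3 precisely because Assumption~\ref{ass:primal_no_beta} demands two non-standard features simultaneously — a high-probability guarantee with only logarithmic dependence on $1/\delta$, and an interval (weakly-adaptive) regret bound whose dependence on the unknown loss range is a clean multiplicative $L^2$, so that the range-rescaling step behind the assumption works without the learner knowing $\nu$ or $\beta$. Once these are taken from \citet{castiglioni2023online}, the rest of the argument is the bookkeeping above.
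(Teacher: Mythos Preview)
Your proposal is correct and follows exactly the approach the paper intends: the theorem is stated in Section~\ref{sec:bwrk} as a direct instantiation of the stochastic half of Corollary~\ref{cor:unknown beta}, with EXP3-SIX as $\cRp$ (whose $L^2\tilde\cO(\sqrt{KT})$ weakly-adaptive guarantee from \citet{castiglioni2023online} verifies Assumption~\ref{ass:primal_no_beta}) and OGD as $\cRd$, and the paper does not give any additional proof beyond this instantiation. Your extra remarks on the randomized void action and on the range $L=\cO(m/\nu)$ are consistent with the framework and do not depart from the paper's argument.
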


        \Cref{thm:bwrkstoc} provides the first instance-independent regret bound under i.i.d. inputs, and it complements the instance-dependent analysis by \citet{kumar2022non}. In particular, in the stochastic setting, they provide a logarithmic instance-dependent bound on the regret of order $O(\nicefrac{Km^2}{\Delta^2}\log T)$, where $\Delta$ is a notion of suboptimality gap (already present in \cite{LiSY21}), which in principle may be arbitrarily small. Although our work does not offer instance-dependent logarithmic bounds on the regret, we advance the study of the BwRK problem along three main directions: i) we design an algorithm which handles at the same time both stochastic and adversarial inputs (while \citet{kumar2022non} only deal with the stochastic input model), ii) our primal-dual approach is arguably simpler, and iii)  we provide the first worst-case dependence on the time horizon $T$.

        \subsection{Economic applications and discussion}
        Even tough the application of our primal-dual template to bandit with knapsacks allows us to extend numerous standard BwK applications to the case with replenishment, it still leaves out some specific scenarios. More specifically, dealing with \emph{negative} rewards introduces additional challenges. We show how this challenge can be easily circumvented in the stochastic case, while we leave as an open problem the adversarial case. 
        In the following, we describe two of such applications showing how to apply our framework to the stochastic setting. Then, we conclude the section by describing the challenges in extending the results to the adversarial setting.

        \xhdr{Inventory management.} As a simple yet meaningful application of our techniques, consider the following inventory management problem. Each day $t$, a shopkeeper is confronted with a decision: either open for business and attempt to sell the goods it has in store, or travel to its supplier in order to restore its inventory. This simplified model captures many real world application that contemplates the trade-off between exploiting the available inventory, and ``skipping a turn'' to replenish it. For the sake of clarity we restrict ourselves to the case of a single resource and supplier, however this could be easily extended to more general instances.
        Formally, the goods in stock at the beginning of each day are $B_t \ge 0$, and the shopkeeper has a set two actions $\cX=\{o,s\}$. Action $o$ corresponds to opening for business with reward $r_t(o) \in [0,1]$ and inventory consumption $c_t(o) \in [0,1]$, while action $s$ corresponds to going to the supplier, with $r_t(s)\in [-1,0]$ and negative resource consumption $c_t(s) \in [-1,0]$ which both depend on the supplier's availability and current price. 
        Clearly, it is possible to select action $o$ only if $B_t \ge 1$, while action $s$ plays the role of the void action $\nullx$. We focus of the stochastic case in which $\E[c_t(s)]\le -\beta$. In particular, \Cref{ass:stoc} is naturally satisfied. Specifically, $\beta$ represents the expected amount of good available from the supplier. %
        Notice that, for the stochastic case, it is sufficient to translate and re-scale the rewards back to be in $[0,1]$. By employing EXP-SIX as in \Cref{sec:bwrk}, this immediately yields a $\tilde {\cO}(\sqrt{T})$ regret bound via \Cref{cor:unknown beta}.

        \xhdr{Bilateral trade.} Many applications involve making buying and selling decisions which happen simultaneously ``on the same day''.
        This is captured by the well known bilateral trade model \citep{vickrey1961counterspeculation,myerson1983efficient,cesa2023bilateral}: each day, the learner (\ie the \emph{merchant}) posts two prices, a price $p_t \in \mathbb{R}_+$ and a price $q_t \in \mathbb{R}_+$ to a seller and a buyer of a good, respectively. The seller (resp., the buyer) has a private valuation of $s_t \in [0,1]$ (resp., $b_t \in (0,1]$). If the valuation $s_t$ is smaller than $p_t$, then the merchant buys one unit of good from the seller, while if the valuation $b_t$ is larger than the price $q_t$ (and there is some inventory left) then the merchant sells a unit of the good to the buyer. The merchant can only sell a good if its stock is such that $B_t\ge1$, \ie the inventory contains at least one unit of it. Moreover, we assume that the merchant has no initial budget, \ie $B_1=0$. The merchant's utility (also called \emph{revenue} in the bilateral trade literature), at time $t$ is defined by
        \[
            \term{rev}_t(p_t,q_t) \defeq q_t \indicator{q_t \le b_t} - p_t \indicator{s_t \le p_t}.
        \]
        Similarly, for any $t\in [T]$, the stock consumption is updated as follows:
        \[
            B_{t+1} = B_{t} - \indicator{q_t \le b_t} + \indicator{s_t \le p_t}.
        \]
        Therefore, the strategy of choosing $p_t=1$ and $q_t>1$ surely increases the budget by one unit, and it has revenue $\term{rev}_t(p_t,q_t)=-1$. This strategy plays the role of the void action $\nullx$, with per-round replenishment parameter $\beta=1$. Similarly to the example above, in the stochastic setting it is enough to translate and re-scale the revenue to satisfy all the assumptions of our algorithm. This immediately yields the  $\tilde\cO({\sqrt{T}})$ regret guarantees.

        \xhdr{Challenges of the adversarial setting.}
        In the case in which rewards are allowed to be negative and inputs are adversarial, the simple simple trick of suitably re-scaling and translating the utilities is not applicable. This is due to the fact that our algorithm guarantees a multiplicative approximation of the benchmark and this multiplicative factor is clearly \emph{not} invariant with respect to translations. We leave as an open problem the question of how to handle application scenarios in which rewards may be negative in the adversarial setting.  Addressing this challenge would likely necessitate the development of novel techniques to resolve the issue. We observe that the results for the adversarial setting would be preserved if we could provide a void action $\nullx$ that has non-negative reward and negative cost, but this is not usually the case in practical applications. Finally, it would be interesting to extend the simplified inventory-management model which we discussed to more complex models which have been proposed in the operations research literature (see, \eg \cite{chen2022dynamic,chen2004coordinating}).

\clearpage

\section*{Acknowledgements}
Matteo Castiglioni is supported by the PNRR-PE-AI FAIR project funded by the NextGeneration EU program. The work of Federico Fusco is partially supported by ERC Advanced Grant 788893 AMDROMA “Algorithmic and Mechanism Design Research in Online Markets”, PNRR MUR project PE0000013-FAIR, and PNRR MUR project IR0000013-SoBigData.it.

\bibliographystyle{plainnat}
\bibliography{bib-abbrv,refs}
\clearpage
\appendix
\section{Proofs Omitted from Section~\ref{sec:adv}}
\theoremadv*

\begin{proof}

    We divided the proof in two steps. First, we show that over the rounds $\cTG$ in which the regret minimizers play the ``lagrangified'' cumulative costs is controlled. Then, in the second step, we show that over the same set of rounds $\cTG$ the ``lagrangified'' utility is large. In order to simplify the notation, we will write $\cumd[\cTG[<\tau]]$ (resp., $\cumd[\cTG[>\tau]]$) in place of $\cumd[\cTG[<\tau]](\cD)$ (resp., $\cumd[\cTG[>\tau]](\cD)$).
    
    \xhdr{Part 1.}
    First, we show that in $\cTG[<\tau]$ the budget spent by the decision maker is sufficiently high. Let $i^*$ be the resource that had the lowest budget in round $\tau\in \cT_\nullx$.

    Summing the budget equation $B_{t+1,i^*}=B_{t,i^\ast}-c_{t,i^*}(x_t)$ over $t=\{1,\ldots,\tau-1\}$, gives us
    \[
    B_{\tau,i^*}-B_{i^*} = -\sum\limits_{t\le \tau-1} c_{t,i^*}(x_t).
    \]
    Since by definition $B_{\tau,i^*}<1$ and $B_{1,i^*}=B$ we get that:
    \begin{equation}\label{eq:th11}
        \sum\limits_{t\le\tau-1}c_{t,i^*}(x_t)> B-1.
    \end{equation}
    Moreover, in rounds $t\in \cT_\nullx$, since the budget was strictly less we have by construction that  $x_t=\nullx$, and thus $c_{t,i^*}(x_t)\le -\beta$. This readily implies that:
    \begin{equation}\label{eq:th12}
        \sum\limits_{t\in\cT_\nullx}c_{t,i^*}(x_t)\le -\beta|\cT_\nullx|.
    \end{equation}
    By combining Equation~\eqref{eq:th11} and Equation~\eqref{eq:th12} we obtain that:
    \begin{align}
        \sum\limits_{t\in \cTG[<\tau]} c_{t,i^*}(x_t) &= \sum\limits_{t\in [\tau]} c_{t,i^*}(x_t) - \sum\limits_{t\in \cT_\nullx} c_{t,i^*}(x_t)\\
        &\ge B-2+\beta |\cT_\nullx|.
    \end{align}
    Since the budget spent in rounds $\cTG[<\tau]$ is large, it must be the case that the dual regret minimizer $\cRd$ collects a large cumulative utility in those rounds. Formally, the regret $\cumd[\cTG[<\tau]]$ of $\cRd$ on rounds $\cTG[<\tau]$ with respect to $\vlambda = \frac{1}{\nu} \boldsymbol{1}_{i^*}$ reads as follows
    \begin{align}
        \sum\limits_{t\in\cTG[<\tau]}\langle\vlambda_t, \vc_t(x_t)-\vrho\rangle&\ge \frac{1}{\nu}\sum\limits_{t\in\cTG[<\tau]} (c_{t,i^*}(x_t)-\vrho_{i^*})-\cumd[\cTG[<\tau]]\nonumber\\
        &\ge \frac{1}{\nu}(B-2+\beta |\cT_\nullx|-|\cTG[<\tau]|\rho_{i^*})-\cumd[\cTG[<\tau]]\nonumber\\
        &=\frac{1}{\nu}(\rho_{i^*}T-2+\beta |\cT_\nullx|-|\cTG[<\tau]|\rho_{i^*})-\cumd[\cTG[<\tau]]\nonumber\\
        &=\frac{1}{\nu}(\rho_{i^*}(|\cTG[<\tau]|+|\cT_\nullx|+|\cTG[>\tau]|)-2+\beta |\cT_\nullx|-|\cTG[<\tau]|\rho_{i^*})-\cumd[\cTG[<\tau]]\nonumber\\
        &= \frac{1}{\nu}(\nu|\cT_\nullx|+\rho_{i^*}|\cTG[>\tau]|-2)-\cumd[\cTG[<\tau]]\nonumber\\
        &\ge \frac{1}{\nu}(\nu|\cT_\nullx|-2)-\cumd[\cTG[<\tau]]\nonumber\\
        &=|\cT_\nullx|-\frac{2}{\nu}-\cumd[\cTG[<\tau]]\label{eq:th14}.
    \end{align}

    Then, by considering $\vlambda=\boldsymbol{0}$ and the definition of the regret on $\cTG[>\tau]$ we get that the utility of dual is bounded by:
    \begin{equation}\label{eq:th15}
        \sum\limits_{t\in \cTG[>\tau]} \langle\vlambda_t, \vc_t(x_t)-\vrho\rangle\ge -\cumd[\cTG[>\tau]].
    \end{equation}

    Thus, by combining the inequalities of Equation~\eqref{eq:th14} and Equation~\eqref{eq:th15} we can conclude that:

    \begin{equation}\label{eq:th15_2}
        \sum\limits_{t\in\cTG}\langle\vlambda_t, \vc_t(x_t)-\vrho\rangle\ge|\cT_\nullx|-\frac{2}{\nu}-\cumd[\cTG[<\tau]] -\cumd[\cTG[>\tau]].
    \end{equation}
    
    \xhdr{Part 2.}
    Now, let $x^*$ be the best unconstrained strategy, \ie $x^*\in\arg\max_{x\in\cX} \sum_{t\in [T]} f_t(x)$ and thus $\sum_{t\le T} f_t(x^*):=\OPT_{\vgamma}$.\footnote{For simplicity we replaced the supremum with the maximum. Results would continue to hold with $\sup$ with minor modifications.} Consider the mixed  strategy $\xi^*\in\Xi$ that randomizes between $x^*$ and $\nullx$, with probability $\frac{\rho+\beta}{1+\beta}$ and $\frac{1-\rho}{1+\beta}$, respetively.
    Then $\mathop{\mathbb{E}}_{x\sim\xi^*}\left[\sum_{t\in\cTG}\langle\vlambda_t, \vrho-\vc_t(x)\rangle\right]\ge0$. This can be easily proved via the following chain of inequalities:
    \begin{align}
    \mathop{\mathbb{E}}_{x\sim\xi^*}&\left[\sum\limits_{t\in\cTG}\langle\vlambda_t, \vrho-\vc_t(x)\rangle\right]=\frac{\rho+\beta}{1+\beta}\sum\limits_{t\in \cTG}\langle\vlambda_t,\vrho-\vc_t(x^*)\rangle+\frac{1-\rho}{1+\beta}\sum\limits_{t\in \cTG}\langle\vlambda_t,\vrho-\vc_t(\nullx)\rangle\nonumber\\
    &=\sum\limits_{t\in \cTG}\langle\vlambda_t,\vrho\rangle-\frac{\rho+\beta}{1+\beta}\sum\limits_{t\in \cTG}\langle\vlambda_t,\vc_t(x^*)\rangle-\frac{1-\rho}{1+\beta}\sum\limits_{t\in \cTG}\langle\vlambda_t,\vc_t(\nullx)\rangle\nonumber\\
    &\ge \rho\sum\limits_{t\in \cTG}\|\vlambda_t\|_1-\frac{\rho+\beta}{1+\beta}\sum\limits_{t\in\cTG}\|\vlambda_t\|_1+\frac{1-\rho}{1+\beta}\beta\sum\limits_{t\in\cTG}\|\vlambda_t\|_1\nonumber\\
    &=\left(\rho-\frac{\rho+\beta}{1+\beta}+\beta\frac{1-\rho}{1+\beta}\right)\sum\limits_{t\in\cTG}\|\vlambda_t\|_1\nonumber\\
    &=0.\label{eq:th13}
    \end{align}

    Then, we can use the defintion of the regret of the primal regret minimizer to find that:
    \begin{align}
        \mathop{\mathbb{E}}_{x\sim\xi^*}\left[\sum\limits_{t\in\cTG}f_t(x)+\langle\vlambda_t, \vrho-\vc_t(x)\rangle\right]-\sum\limits_{t\in\cTG}\mleft( f_t(x_t)+\langle\vlambda_t,\vrho-\vc_t(x_t)\rangle\mright)\le \cump[\cTG],
    \end{align}
    and by rearranging and using the inequality of~\Cref{eq:th13} we have that:
    \[
    \sum\limits_{t\in\cTG} f_t(x_t)+\langle\vlambda_t,\vrho-\vc_t(x_t)\rangle\ge \mathop{\mathbb{E}}_{x\sim\xi^*}\left[\sum\limits_{t\in\cTG}f_t(x) \right]-\cump[\cTG].
    \]
   By building upon this inequality we obtain the following:
    \begin{align}
    \sum\limits_{t\in\cTG} f_t(x_t)+\langle\vlambda_t,\vrho-\vc_t(x_t)\rangle&\ge \mathop{\mathbb{E}}_{x\sim\xi^*}\left[\sum\limits_{t\in\cTG}f_t(x) \right]-\cump[\cTG]\nonumber\\
    &= \mathop{\mathbb{E}}_{x\sim\xi^*}\left[\sum\limits_{t\le T}f_t(x)-\sum\limits_{t\in \cT_\nullx}f_t(x) \right]-\cump[\cTG]\nonumber\\
    &\ge \mathop{\mathbb{E}}_{x\sim\xi^*}\left[\sum\limits_{t\le T}f_t(x)\right]-|\cT_\nullx|-\cump[\cTG]\nonumber\\
    &=\frac{\rho+\beta}{1+\beta}\sum\limits_{t\le T} f_t(x^*)+\frac{1-\rho}{1+\beta}\sum\limits_{t\le T} f_t(\nullx)-|\cT_\nullx|-\cump[\cTG]\nonumber\\
    &\ge \frac{\rho+\beta}{1+\beta}\sum\limits_{t\le T} f_t(x^*)-|\cT_\nullx|-\cump[\cTG]\nonumber\\
    &= \frac{\rho+\beta}{1+\beta}\OPT_{\vgamma}-|\cT_\nullx|-\cump[\cTG].\label{eq:th16}
    \end{align}
\xhdr{Concluding.}
    Using the ineuality of Equation~\eqref{eq:th15} and the inequality of Equation~\eqref{eq:th16} we have
    \begin{align*}
        \sum\limits_{t\le T} f_t(x_t)&\ge \sum\limits_{t\in\cTG} f_t(x_t)\\
        &\ge \sum\limits_{t\in\cTG}\langle\vlambda_t,\vc_t(x_t)-\vrho\rangle + \frac{\rho+\beta}{1+\beta}\OPT_{\vgamma}-|\cT_\nullx|-\cump[\cTG]\\
        &\ge |\cT_\nullx|-\frac{2}{\nu}-\cumd[\cTG[<\tau]] -\cumd[\cTG[>\tau]] + \frac{\rho+\beta}{1+\beta}\OPT_{\vgamma}-|\cT_\nullx|-\cump[\cTG]\\
        &=\frac{\rho+\beta}{1+\beta}\OPT_{\vgamma}-\frac{2}{\nu}-\cumd[\cTG[<\tau]] -\cumd[\cTG[>\tau]]-\cump[\cTG],
    \end{align*}
    which concludes the proof.
\end{proof}

\section{Proofs Omitted from Section~\ref{sec:stoc}}

\lemmastoc*
\begin{proof}
    We start by proving that the ineuqality of Equation~\eqref{eq:HoefEmpty2} holds with probability $1-\delta/2$.
    Let $K=|\cT_\nullx|$.
    Then, we can easily see that, for $i\in[m]$, with probability $1-\nicefrac{\delta}{(2mT)}$
    \begin{align*}
    \sum\limits_{k\in[K]}(c_{k,i}(\nullx)-\bar c_{i}(\nullx))&\le \sqrt{2K\log\left(\frac{4mT}{\delta}\right)}\le \sqrt{2T\log\left(\frac{4mT}{\delta}\right)},
    \end{align*}
    where the first inequality holds by Hoeffding's bound.
    By taking a union bound over all possible lengths of $\cT_\nullx$ (which are $T$) and $i\in [m]$, we obtain that for all possible sets $\cT_\nullx$ it holds:
    \[
    \sum\limits_{t\in\cT_\nullx}\left(c_{t,i}(\nullx)-\bar c_{i}(\nullx)\right)\le \sqrt{2T\log\left(\frac{4mT}{\delta}\right)}
    \]
    with probability at least $1-\delta/2$. Then the proof of the inequality of Equation~\eqref{eq:HoefEmpty2} is concluded by observing that:
    \[
    \sum\limits_{t\in\cT_\nullx}\bar c_i(\nullx)\le-\beta|\cT_\nullx|,
    \]
    and that $\sqrt{2T\log\left(\nicefrac{4mT}{\delta}\right)}\le M\cume$.
    Equation~\eqref{eq:HoefEmpty3} can be proved in a similar way. 
    Indeed, for any fixed $\cTG$ of size $K=|\cTG|$, and for any strategy mixture $\xi$, by Hoeffding we have that with probability at least $1-\nicefrac{\delta}{2T}$ the following holds
    \begin{align*}
    \mathop\mathbb{E}_{x\sim\xi}\left[\sum_{t \in  \cTG } f_{t}(x)+\langle\vlambda_t,\vrho-\vc_t(x)\rangle \right]&\ge \mathop\mathbb{E}_{x\sim\xi}\left[\sum_{ t \in  \cTG} \bar f(x)+\langle\vlambda_t,\vrho-\bar \vc(x)\rangle\right]-(1+2M)\sqrt{\frac{K}{2}\log\left(\frac{4T}{\delta}\right)}\\
    &\ge \mathop\mathbb{E}_{x\sim\xi}\left[\sum_{ t \in  \cTG} \bar f(x)+\langle\vlambda_t,\vrho-\bar \vc(x)\rangle\right]-(1+2M)\sqrt{\frac{T}{2}\log\left(\frac{4T}{\delta}\right)}\\
    &\ge \mathop\mathbb{E}_{x\sim\xi}\left[\sum_{ t \in  \cTG} \bar f(x)+\langle\vlambda_t,\vrho-\bar \vc(x)\rangle\right]-M\cume.
    \end{align*}
    By taking a union bound over all possible $T$ lengths of $\cTG$, we obtain that for all possible sets $\cTG$, the equation above holds with probability $1-\delta/2$.

    The Lemma follows by a union bound on the two equations above, which hold separately with probability $1-\delta/2$.
\end{proof}

\thstoc*
\begin{proof}

    The proof follows a similar approach to Theorem~\ref{thm:adv} with extra details regarding concentration inequalities to exploit stochasticity of the environment. We sketch here the proof for the sake of clarity.
    In order to simplify the notation of the proof, we will write $\cumd[\cTG[<\tau]]$ (resp., $\cumd[\cTG[>\tau]]$) in place of $\cumd[\cTG[<\tau]](\cD)$ (resp., $\cumd[\cTG[>\tau]](\cD)$).

    \xhdr{Part 1.} 
    In the same fashion as the proof of Theorem~\ref{thm:adv}, we can define $i^*$ to be the index of the resource that had budget less then $1$ at round $\tau$, and show that
    \begin{equation}\label{eq:th21}
    \sum\limits_{t\le\tau-1 }c_{t,i^*}(x_t)>B-1,
    \end{equation}
    which is exactly Equation~\eqref{eq:th11} from Theorem~\ref{thm:adv}.
    Moreover, in rounds $t\in\cT_\nullx\subset[\tau]$ we have that $x_t=\nullx$, and by Equation~\eqref{eq:HoefEmpty2} we obtain
    \begin{equation}\label{eq:th22}
        \sum\limits_{t\in\cT_\nullx} c_{t,i^*}(\nullx)\le -\beta|\cT_{\nullx}| +\cume,
    \end{equation}
    which follows Equation~\eqref{eq:th11} from Theorem~\ref{thm:adv}, with the addition of the $\cume$ term (see definition in \Cref{sec:stoc}), due to the concentration inequality. 
    Then, since $[\tau]=\cT_{\nullx}\cup\cTG[<\tau]$, by combining Equation~\eqref{eq:th21} and Equation~\eqref{eq:th22} we can conclude that:
    \begin{align}
        \sum\limits_{t\in\cTG[<\tau]} c_{t,i^*}(x_t) &= \sum\limits_{t\le\tau} c_{t,i^*}(x_t)-\sum\limits_{t\in\cT_\nullx} c_{t,i^*}(x_t)     \ge B-2+\beta|\cT_\nullx|-\cume\label{eq:th23}.
    \end{align}
    Then, through a chain of inequalities similar to the one of Equation~\eqref{eq:th14} we can conclude that
    \begin{equation}
        \sum\limits_{t\in\cTG[< \tau]} \langle\vlambda_t,\vc_t(x_t)-\vrho\rangle\ge|\cT_\nullx|-\frac{2}{\nu}-\frac{1}{\nu}\cume-\cumd[\cTG[< \tau]].
    \end{equation}
    The same arguments used for Equation~\ref{eq:th15} readily imply that
    \begin{equation}\label{eq:th24}
    \sum\limits_{t\in\cTG} \langle\vlambda_t,\vc_t(x_t)-\vrho\rangle\ge|\cT_\nullx|-\frac{2}{\nu}-\frac{1}{\nu}\cume-\cumd[\cTG[< \tau]]-\cumd[\cTG[>\tau]].
    \end{equation}

    \xhdr{Part 2.}
    Define $\xi^*$ to be the optimal stochastic policy that achieves $\OPTLP_{\bar f, \bar \vc}=\sup_{\xi\in\Xi}\mathbb{E}_{x\sim\xi}[\bar f(x)]$ and $\mathbb{E}_{x\sim\xi}[\vc(x)]\preceq \vrho$.
    By the definition of regret of $\cRp$ on the rounds $\cTG$ with respect to $\xi^*$ the following holds
    \begin{align}
        \sum\limits_{t\in\cTG} f_t(x_t)+\langle\vlambda_t,\vrho-\vc_t(x_t)\rangle&\ge \mathbb{E}_{x\sim\xi^*}\left[\sum\limits_{t\in\cTG} f_t(x)+\langle\vlambda_t,\vrho-\vc_t(x)\rangle\right]-\cump[\cTG]\nonumber\\
        &\ge \mathbb{E}_{x\sim\xi^*}\left[\sum\limits_{t\in\cTG} \bar f(x)+\langle\vlambda_t,\vrho-\bar \vc(x)\rangle\right]-\cump[\cTG]-\cume \nonumber  \\ 
        &\ge T\cdot \OPTLP_{\bar f, \bar \vc}-|\cT_\nullx| -\cump[\cTG]-\cume,\label{eq:th25}
    \end{align}
    where we used Equation~\eqref{eq:HoefEmpty2} in the first inequality, and $[T]=\cTG\cup\cT_{\nullx}$ in the third one.

    \xhdr{Concluding.} By combining Equation~\eqref{eq:th24} and Equation~\eqref{eq:th25} we can conclude that:
    \begin{align}
        \sum\limits_{t\le T} f_t(x_t)\ge T\cdot \OPTLP_{\bar f, \bar \vc} -\cump[\cTG]-\frac{2}{\nu}-\frac{1}{\nu}\cume-\cumd[\cTG[< \tau]]-\cumd[\cTG[>\tau]].
    \end{align}
    which concludes the proof.
\end{proof}

\section{Proofs Omitted from \Cref{sec:choosing}}

\lemmaboundLM*

\begin{proof}

We will address the stochastic and adversarial case separately. First, we focus on the stochastic case. The adversarial case will follow via minor modifications.

\xhdr{Stochastic setting.} We prove the theorem by contradiction. 
Suppose that there exists a round $t_2$ such that $\lVert \vlambda_{t_2}  \rVert_{1} \ge 8m/\nu$,
and let $t_1\in [t_2]$ be the the first round such that $\lVert \vlambda_{t_1} \rVert_{1}   \ge 1/\nu$. %
Notice that the dual regret minimizer $\cRd$ (\ie OGD) guarantees that:
\begin{equation*}\label{eq:mu t1 mu t2}
\lVert\vlambda_{t_1} \rVert_{1}   \le \frac{1}{\nu}+ m \eta\le \frac{2}{\nu} \quad\text{\normalfont and }\quad  \lVert \vlambda_{t_2} \rVert_{1} \le \frac{8m}{\nu}+ m\eta\le \frac{9m}{\nu},
\end{equation*} 
since the dual losses are in $[-1,1]^m$, and by assumption $\|\vlambda_{t_1-1}\|_1\le 1/\nu$ and $\|\vlambda_{t_2-1}\|_1\le 8m/\nu$.
Hence, the range of payoffs of the primal regret minimizer $|
\lossp[t]|$ in the interval $\cT:=\{t_1,\ldots, t_2\}$ can be bounded as follows
\begin{align*}
\sup\limits_{x\in \cX, t\in \cT}|\lossp[t](x)|&\le \sup\limits_{x\in \cX, t\in \cT}\Big\{|f_t(x)|+\lVert\vlambda_t\rVert_1\cdot\lVert\vrho-\vc_t(x)\rVert_\infty\Big\}\le1+2\frac{9m}{\nu} \le \frac{19m}{\nu}.
\end{align*}
Therefore, by assumption, the regret of the primal regret minimizer is at most:
\[
\cump[\cT]\le\mleft(\frac{19m}{\nu}\mright)^2 \uppp[T,\delta].
\] 

Similarly to the proof of Lemma~\ref{lm:HoefEmpty}, by applying a Hoeffding's bound to all the intervals and a union bound, we get that, with probability at least $1-\delta$, it holds
\begin{align}\label{eq:hoef}
	 \sum_{t \in \cT} \langle\vlambda_t, \vc_t(\nullx)\rangle&\le \sum_{t \in \cT} \langle\vlambda_t, \bar\vc(\nullx)\rangle + \frac{9m}{\nu}\cume\nonumber\\
  &\le -\beta\sum_{t \in \cT} \|\vlambda_t \|_1 + \frac{9m}{\nu}\cume.
\end{align}

Then, by the no-regret property of the primal regret minimizer we have
\begin{align}
    \sum_{t\in\cT} (f_t(x_t) -  \langle\vlambda_t ,c_{t}&(x_t)- \vrho\rangle)  \ge \sum_{t\in\cT} (f_t(\nullx)-  \langle\vlambda_t ,c_{t}(\nullx)- \vrho\rangle) - \mleft(\frac{19m}{\nu}\mright)^2 {\uppp[T,\delta]}\nonumber \\
    &\ge  \sum_{t\in\cT} f_t(\nullx)+\beta\sum_{t\in\cT}  \lVert \vlambda_t \rVert_1+\sum\limits_{t\in\cT}\langle\vlambda_t,\vrho\rangle-  \mleft(\frac{9m}{\nu}\mright)\cume - \mleft(\frac{19m}{\nu}\mright)^2  {\uppp[T,\delta]} \nonumber \\
    &\ge \nu\sum_{t\in\cT}  \lVert \vlambda_t \rVert_1-  \mleft(\frac{9m}{\nu}\mright)\cume - \mleft(\frac{19m}{\nu}\mright)^2  {\uppp[T,\delta]}\nonumber \\
    &\ge (t_2-t_1) -  \mleft(\frac{9m}{\nu}\mright)\cume - \mleft(\frac{19m}{\nu}\mright)^2 {\uppp[T,\delta]},\label{eq:intermediate1}
\end{align}
where in the second inequality we use~\cref{eq:hoef} and in the last one we use that $ \lVert\vlambda_t \rVert_1\ge 1/\nu$  for $t \in \cT$.

For each resource $i \in [m]$, we consider two cases: i) the dual regret minimizer never has to perform a projection operation during $\cT$, and ii) $\tilde t_i\in \cT$ is the last time in which $\lambda_{t_i,i}=0$.
In both cases, we show that
\begin{equation}\label{eq:lemmabounded_twocases}
    \sum\limits_{t\in\cT}\langle\vlambda_t, \vrho-\vc_t(x_t)\rangle\le \sum\limits_{i\in[m]}\left[\frac{\lambda_{t_1,i}-\lambda_{t_2,i}}{\eta\nu}\right]^-+ \frac{5m}{\nu^2\eta}.
\end{equation}
In the first case, since we are using OGD as the dual regret minimizer and, by assumption, it never has to perform projections during $\cT$, it holds that for all resources $i\in[m]$:
\[
\lambda_{t_2,i} = \eta  \sum_{t \in \cT} (c_{t,i}(x_{t})- \rho)+\lambda_{t_1,i}.
\]
Now consider the Lagrange multiplier $\vlambda^*$ such that, for each $i\in[m]$,
\begin{equation}\lambda^*_i:=
    \begin{cases}
        \nicefrac{1}{\nu} &\text{if}\quad\sum_{t \in \cT} (c_{t,i}(x_{t})- \rho)\ge0\\
        0 &\text{otherwise}
    \end{cases}.
\end{equation}
By exploiting \Cref{lem:OGD} for a single component $i\in[m]$, we have that:
\begin{align}
   \sum\limits_{t\in\cT}(\lambda_{i}^*-\lambda_{t,i})(c_{t,i}(x_t)-\rho)&\le \frac{(\lambda^*_i-\lambda_{t_1,i})^2}{2\eta}+\frac12\eta T,
\end{align}
which yields the following
\begin{align*} 
    \sum_{t \in \cT}\lambda_{t,i}\cdot(\rho-c_{t,i}(x_t)) & \le 	\sum_{t \in \cT}\lambda^*_i\cdot (\rho-c_{t,i}(x_t))+  \frac{(\lambda^*_i-\lambda_{t_1,i})^2}{2\eta}+\frac12\eta T\\
    &\le\frac{1}{\nu}\left[\frac{\lambda_{t_1,i}-\lambda_{t_2,i}}{\eta}\right]^{-}+\frac{(\lambda^*_i-\lambda_{t_1,i})^2}{2\eta}+\frac12\eta T.\numberthis\label{eq:intermediate2}
\end{align*}
Then, since $\|\vlambda_{t_1-1}\|_1\le 1/\nu$ by construction, it holds $\lambda_{t_1-1,i}\le1/\nu $. Hence, since the dual regret minimizer is OGD and its utilities are in $[-1,1]^m$, it holds 
\begin{align}\label{eq:smallLambda}
    \lambda_{t_1,i}\le\frac1\nu+\eta.
\end{align}
Then, we have
\[
\frac{(\lambda^*_i-\lambda_{t_1,i})^2}{2\eta} \le \frac{\left( \max\{\lambda^*_i,\lambda_{t_1,i} \}\right)^2}{2\eta}\le \frac{1}{2\eta} \left( \frac{1}{\nu}+\eta \right)^2.
\]
It is easy to verify that for all $\nu$, $T$, and $\eta\le\frac{1}{2\sqrt{T}}$:\footnote{
Notice that the definition of $\eta$ satisfies $\eta\le\frac{1}{2\sqrt{T}}$.}
\[
\frac{1}{2\eta} \left( \frac{1}{\nu}+\eta \right)^2+\frac{1}{2}\eta T \le \frac{5}{\nu^2\eta},
\]
which implies that
\[
\sum_{t \in \cT} \lambda_{t,i},\rho-c_{t,i}(x_t)\le \frac{1}{\nu}\left[\frac{\lambda_{t_1,i}-\lambda_{t_2,i}}{\eta}\right]^{-}+\frac{5}{\nu^2\eta}.
\]

In the second case we define $\tilde t_i$ as the last time step $t\in[t_1,t_2]$ in which $\lambda_t=0$. Thus, the following holds:
\begin{equation}\label{eq:lembounded0}
\lambda_{t_2,i} = \eta  \sum_{t \in [\tilde t_i,t_2]} (c_{t,i}(x_{t})- \rho).
\end{equation}
Now, consider the Lagrange multipliers $\lambda_{i,1}=0$ and $\lambda_{i,2}=\frac{1}{\nu}$. 
By~\Cref{lem:OGD}, we have that the regret of the dual regret minimizer with respect to $0$ over $[t_1, \tilde t_i]$ is bounded by:
\begin{align}\label{eq:lembounded1}
   \sum\limits_{t\in[t_1, \tilde t_i-1]}\lambda_{t,i}(\rho-c_{t,i}(x_t))&\le \frac{\left(\lambda_{1,i}-\lambda^*_{i,1} \right)^2}{2\eta}+\frac12\eta T\le \frac{\left(\frac{1}{\nu}+\eta\right)^2}{2\eta}+\frac12\eta T,
\end{align}
where in the last inequality we use \cref{eq:smallLambda}.

Similarly, on the interval $[\tilde t_i+1, t_2]$ we have that the regret of the dual regret minimizer with respect to $1/\nu$ is bounded by
\begin{align*}
   \sum\limits_{t\in[\tilde t_i, t_2]} \left(\lambda_{i,2}-\lambda_{t,i}\right)\cdot(c_{t,i}(x_t)-\rho) \le \frac{\left( \lambda^*_{i,2}- \lambda_{\tilde t_{i,i}}\right)^2}{2\eta} + \frac12\eta T \le  \frac{1}{2\eta}\frac{1}{\nu^2}+\frac12\eta T,
\end{align*}
which can be rearranged into
\begin{align*}
   \sum\limits_{t\in[\tilde t_i, t_2]} \lambda_{t,i}\cdot(\rho-c_{t,i}(x_t)) &\le \frac1\nu\sum\limits_{t\in[\tilde t_{i}, t_2]}(\rho-c_{t,i}(x_t))+\frac{1}{2\eta}\frac{1}{\nu^2}+\frac12\eta T\\
   &=-\frac{\lambda_{t_2,i}}{\eta\nu}+\frac{1}{2\eta}\frac{1}{\nu^2}+\frac12\eta T \\
   &\le\left[\frac{\lambda_{t_1,i}-\lambda_{t_2,i}}{\eta\nu}\right]^-+\frac{1}{2\eta}\frac{1}{\nu^2}+\frac12\eta T\numberthis\label{eq:lembounded2}
\end{align*}
where  the equality follows from  \Cref{eq:lembounded0} and the last inequality from  $-x\le\min(y-x,0)$ for $x,y\ge0$.

Then by summing \Cref{eq:lembounded1} and \Cref{eq:lembounded2} we obtain
\begin{align*} 
    \sum_{t \in [t_1,t_2]}\lambda_{t,i}\cdot(\rho-c_{t,i}(x_t)) & \le \left[\frac{\lambda_{t_1,i}-\lambda_{t_2,i}}{\eta\nu}\right]^-+\frac{\left(\frac{1}{\nu}+\eta\right)^2}{2\eta}+\frac{1}{2\eta\nu^2}+\eta T\numberthis\label{eq:intermediate2}.
\end{align*}

If we take $\eta\le\frac1{2\sqrt{T}}$, then the following inequality holds for all $\nu$ and $T$:\footnote{
Notice that the definition of $\eta$ satisfies $\eta\le\frac{1}{2\sqrt{T}}$.}
\[
\frac{\left(\frac{1}{\nu}+\eta\right)^2}{2\eta}+\frac{1}{2\eta\nu^2}+\eta T \le \frac{5}{\nu^2\eta}.
\]

This concludes the second case and concludes the proof of  \Cref{eq:lemmabounded_twocases}.

Now we can sum over all resources $i\in[m]$ and conclude that:
\begin{align*}
\sum\limits_{t\in\cT}\langle \vlambda_t, \vrho-\vc_t(x_t)\rangle&\le \sum\limits_{i\in[m]}\left[\frac{\lambda_{t_1,i}-\lambda_{t_2,i}}{\eta\nu}\right]^-+\frac{5m}{\eta\nu^2}\\
&\le \frac{\|\vlambda_{t_1}\|_1-\|\vlambda_{t_2}\|_1}{\eta\nu}+\frac{5m}{\eta\nu^2}\\
&\le-\frac{6m}{\nu^2\eta}+\frac{5m}{\eta\nu^2}\\
&=-\frac{m}{\nu^2\eta}.
\end{align*}

Finally, the cumulative utility of the primal regret minimizer over $\cT$ is bounded by
\begin{align*} 
    \sum_{t \in [t_1,t_2]}\mleft(f_t(x_t)+ \langle \lambda_t,\vrho-c_t(x_t)\rangle \mright) &\le (t_2-t_1)-\frac{m}{\nu^2\eta}  .\numberthis\label{eq:intermediate2}
\end{align*}

By putting \cref{eq:intermediate1} and \cref{eq:intermediate2} together we have that
\begin{align*}
(t_2-t_1)-  \frac{m}{\nu^2\eta} & \ge   (t_2-t_1) -  \mleft(\frac{9m}{\nu}\mright)\cume - \mleft(\frac{19m}{\nu}\mright)^2 {\uppp[T,\delta]}\\
&\ge   (t_2-t_1) -  \mleft(\frac{9m}{\nu}\mright)\cume - \mleft(\frac{19m}{\nu}\mright)^2 {\uppp[T,\delta]}.
\end{align*}
Hence, 
\begin{align} \label{eq:contr1}
\frac{m}{\nu^2 \eta} \le  \mleft(\frac{9m}{\nu}\mright)\cume + \mleft(\frac{19m}{\nu}\mright)^2 {\uppp[T,\delta]} .
\end{align}
which holds by the assumption that there exists a time $t_2$ such that $\|\vlambda_{t_2}\|_1\ge \frac{8m}{\nu}$.

Thus, if we set 
\[
\eta \defeq \left(18 \cume +361m \uppp[T,\delta]+2m\sqrt{T}\right)^{-1},
\]
we reach a contradiction with Equation~\eqref{eq:contr1} by observing that
\[
\frac{m}{\nu^2\eta}= \frac{m}{\nu^2}\left(18 \cume+ 361m \uppp[T,\delta] +m\nu\sqrt{T}\right)>\left(\frac{9m}{\nu}\right)\cume + \mleft(\frac{19m}{\nu}\mright)^2 {\uppp[T,\delta]}.
\]
This concludes the proof for the stochastic setting.

\xhdr{Adversarial setting.}
In the adversarial setting all the passages above still apply by setting $\cume=0$. This is because, using \Cref{ass:adv}, we can refrain from using the concentration inequality originating the term $\cume$. 

This concludes the proof.
\end{proof}

\corollaryFINAL*

\begin{proof}
    Lemma~\ref{lem:bounded_lagrangian} allows us to bound the $\ell_1$-norm of the Lagrange multipliers which are played by $\|\vlambda_t\|_1\le8m/\nu$.
    This fact, by using Lemma~\ref{lem:OGD}, readily gives a bound on the terms $\cumd[\cTG<\tau](\cD)$ and $\cumd[\cTG>\tau](\cD)$ by observing that 
    \[
    \sup\limits_{\substack{\vlambda, \vlambda',\\ \|\vlambda\|_1, \|\vlambda'\|_1\le\frac{8m}{\nu}, }}\|\vlambda-\vlambda'\|_2^2\le \frac{64m^3}{\nu^2}.
    \]
    This, together with the definition of $\eta=(k_1 \cume +k_2 m \uppp[T,\delta]+2m\sqrt{T})^{-1}$ yields the bound 
    \begin{align*}
    \max(\cumd[\cTG<\tau],\cumd[\cTG>\tau])&\le \frac{32m^3}{\nu^2}(k_1 \cume +k_2 m \uppp[T,\delta]+2m\sqrt{T})+\frac{m}{2}\frac{T}{k_1 \cume +k_2 m \uppp[T,\delta]+2m\sqrt{T}}\\
    &=k_1\frac{32m^3}{\nu^2}\cume + k_2 \frac{32m^4}{\nu^2}\uppp[T,\delta] + \frac{32m^4}{\nu} \sqrt{T} + \frac{1}{4}\sqrt{T}\\
    &\le \frac{k_3}{2} \frac{m^4}{\nu^2}\left(\cume + \uppp[T,\delta]+\sqrt{T}\right).
    \end{align*}

    Then, by Assumption~\ref{ass:primal_no_beta} and Lemma~\ref{lem:bounded_lagrangian}, the regret of the primal regret minimizer $\cRp$ is bounded by:
    \begin{align}
    \cump[\cTG]&\le \left(1+\frac{16m}{\nu}\right)^2\uppp\\
    &=k_4\frac{m^2}{\nu^2}\uppp.
    \end{align}

    This concludes the proof by leveraging Theorem~\ref{thm:adv} and Theorem~\ref{th:stoc}.
\end{proof}

\newpage

\end{document}